\theoremstyle{plain}
\newtheorem{theorem}{Theorem}[section]
\newtheorem{lemma}[theorem]{Lemma}
\newtheorem*{lemma*}{Lemma}
\theoremstyle{definition}
\newtheorem{assumption}{Assumption}
\newcommand{\R}{\mathbb{R}}
\def\w{\mathbf{w}}
\def\y{\mathbf{y}}
\def\x{\mathbf{x}}
\def\dim{p}   
\newcommand{\numsam}{n}
\newcommand{\SGD}{\textsc{SGD}\xspace}
\newcommand{\algo}{\textsc{CheapSVRG}\xspace}
\newcommand{\eqdef}{\stackrel{\textrm{def}}{=}}
\newcommand{\norm}[1]{\left \Vert #1\right \Vert}
\newcommand{\gradf}{\nabla f}
\newcommand{\Q}{Q}
\def\Q{Q}
\def\S{\mathcal{S}}
\def\Q{\mathcal{Q}}
\def\v{\mathbf{v}}
\def\y{\mathbf{y}}
\newcommand{\Ex}[2]{\mathbb{E}_{#1}\left[#2\right]}
\newcommand{\E}[1]{\mathbb{E}\left[#1\right]}
\newcommand{\wt}{\widetilde{\w}}
\newcommand{\mt}{\widetilde{\boldsymbol{\mu}}}
\newcommand{\ws}{\w^\star}
\title{Trading-off variance and complexity in stochastic gradient descent}
\author[1]{Vatsal Shah\thanks{vatsalshah1106@utexas.edu}}
\author[1]{Megasthenis Asteris\thanks{megas@utexas.edu}}
\author[1]{Anastasios Kyrillidis\thanks{anastasios@utexas.edu}}
\author[1]{Sujay Sanghavi\thanks{sanghavi@mail.utexas.edu}}
\affil[1]{The University of Texas at Austin}
\begin{document}

\maketitle

\begin{abstract}
Stochastic gradient descent (\SGD) is the method of choice for large-scale machine learning problems, 
by virtue of its light complexity per iteration.
However, it lags behind its non-stochastic counterparts with respect to the convergence rate,
due to high variance introduced by the stochastic updates.
The popular Stochastic Variance-Reduced Gradient (\textsc{Svrg}) method mitigates this shortcoming,
introducing a new update rule which requires infrequent 
passes over the entire input dataset to compute the full-gradient.

In this work, we propose \algo,
a stochastic variance-reduction optimization scheme.
Our algorithm is similar to \textsc{Svrg},
but instead of the full gradient,
it uses a surrogate which can be efficiently computed on a small subset of the input data.
It achieves a linear convergence rate 
-- up to some error level, depending on the nature of the optimization problem --
 and features a trade-off between the computational complexity and the convergence rate.
Empirical evaluation shows that \algo performs at least competitively compared to the state of the art.
\end{abstract}



%
%
\section{Introduction}
Several machine learning and optimization problems involve the minimization of a smooth, convex and \emph{separable} cost function $F: \R^\dim \rightarrow \R$:
\begin{align}
	\min_{\w \in \R^\dim} F(\w) := \frac{1}{\numsam} \sum_{i = 1}^\numsam f_i(\w),
	\label{intro:eq_00}
\end{align}
where the $p$-dimensional variable $\w$ represents model parameters,
and each of the functions $f_i(\cdot)$ depends on a single data point.
Linear regression is such an example:
given points $\lbrace (\mathbf{x}_{i}, y_{i})\rbrace_{i=1}^{n}$ in $\mathbb{R}^{p+1}$,
one seeks $\mathbf{w} \in \mathbb{R}^{p}$ that minimizes
the sum of $f_{i}(\mathbf{w}) = (y_{i}-\mathbf{w}^{\top}\mathbf{x}_{i})^{2}$, $i=1, \hdots, n$.
Training of neural networks~\cite{dean2012large, johnson2013accelerating},
multi-class logistic regression~\cite{johnson2013accelerating, roux2012stochastic},
image classification~\cite{bouchard2015accelerating}, 
matrix factorization~\cite{sa2015global}
and many more tasks in machine learning 
entail an optimization of similar form.

Batch gradient descent schemes can effectively solve small- or moderate-scale instances of~\eqref{intro:eq_00}.
Often though,
the volume of input data 
outgrows our computational capacity,
posing major challenges.
Classic batch optimization methods~\cite{nesterov2004introductory, boyd2004convex} perform several passes over the entire input dataset to compute the full gradient, or even the Hessian\footnote{In this work, we will focus on first-order methods only.
Extensions to higher-order schemes is left for future work.}, in each iteration,
incurring a prohibitive cost for very large problems.

Stochastic optimization methods overcome this hurdle
by computing only a surrogate of the full gradient $\nabla F(\mathbf{w})$, based on a small subset of the input data.
For instance, the popular \SGD~\cite{robbins1951stochastic} scheme in each iteration takes a small step in a direction determined by a single, randomly selected data point.
This imperfect gradient step results in smaller progress per-iteration,
though manyfold in the time it would take for a batch gradient descent method to compute a full gradient~\cite{bottou2010large}.

Nevertheless, the approximate `gradients' 
of stochastic methods introduce variance in the course of the optimization.
Notably, vanilla \SGD~methods can deviate from the optimum, even if the initialization point is the optimum \cite{johnson2013accelerating}.
To ensure convergence,
the learning rate has to decay to zero,
which results to sublinear convergence rates~\cite{robbins1951stochastic},
a significant degradation from the linear rate achieved by batch gradient methods.

A recent line of work~\cite{roux2012stochastic, johnson2013accelerating, konevcny2013semi, konevcny2014ms2gd} has made promising steps towards the middle ground of these two extremes.
A full gradient computation is occasionally interleaved with the inexpensive steps of \SGD, dividing the course of the optimization in \emph{epochs}.
Within an epoch, descent directions are formed as a linear combination 
of an approximate gradient (as in vanilla \SGD) and a full gradient vector computed at the beginning of the epoch.
Though not always up-to-date, the full gradient information 
reduces the variance of gradient estimates and provably speeds up the convergence. 


Yet, as the size of the problem grows, 
even an infrequent computation of the full gradient may severely impede the progress of these variance-reduction approaches.
For instance, when training large neural networks~\cite{ngiam2011optimization, sutskever2013importance, dean2012large}),
the volume of the input data rules out the possibility of computing a full gradient within any reasonable time window.
Moreover, in a distributed setting, 
accessing the entire dataset may incur significant tail latencies \cite{zinkevich2010parallelized}.
On the other hand, 
traditional stochastic methods exhibit low convergence rates and in practice frequently fail to come close to the optimal solution in reasonable amount of time.

\paragraph{Contributions.} The above motivate to design algorithms that try to compromise the two extremes
$(i)$ circumventing the costly computation of the full gradient, 
while $(ii)$ admitting favorable convergence rate guarantees.
In this work, we reconsider the computational resource allocation problem in stochastic variance-reduction schemes: 
\emph{given a limited budget of atomic gradient computations, how can we utilize those resources
in the course of the optimization to achieve faster convergence?}
Our contributions can be summarized as follows:
\begin{itemize}
\item [$(i)$]
We propose \algo, a variant of the popular~\textsc{Svrg} scheme \cite{johnson2013accelerating}.
Similarly to \textsc{Svrg},
our algorithm divides time into epochs, but at the beginning of each epoch computes only a surrogate of the full gradient using a subset of the input data.
Then, it computes a sequence of estimates using a modified version of \SGD steps.
Overall, \algo can be seen as a family of stochastic optimization schemes
encompassing \textsc{Svrg} and vanilla \SGD.
It exposes a set of tuning knobs that control trade-offs between the per-iteration computational complexity and the convergence rate.
\item [$(ii)$]
Our theoretical analysis shows
that \algo achieves linear convergence rate 
in expectation and up to a constant factor, that depends on the problem at hand. 
Our analysis is along the lines of similar results for both deterministic and stochastic schemes~\cite{schmidt2014convergence, nedic2001convergence}.
\item [$(iii)$]
We supplement our theoretical analysis with experiments on synthetic and real data. 
Empirical evaluation supports our claims for linear convergence
and shows that 
\algo performs at least competitively with the state of the art.
\end{itemize} 

%


%
%
%
%
%
%
\section{Related work}

There is extensive literature on classic \SGD approaches.
We refer the reader to~\cite{bottou2010large, bertsekas2011incremental} and references therein for useful pointers.
Here, we focus on works related to variance reduction using gradients,
and consider only primal methods; see \cite{shalev2013stochastic, defazio2014finito, mairal2013optimization} for dual.

Roux et al. in~\cite{roux2012stochastic} are among the first that considered variance reduction methods in stochastic optimization.
Their proposed scheme, \textsc{Sag}, achieves linear convergence under smoothness and strong convexity assumptions
and is computationally efficient: it performs only one atomic gradient calculation per iteration. 
However, it is not memory efficient\footnote{The authors show how to reduce memory requirements in the case where $f_i$ depends on a linear combination of $\w$.} as it requires storing all intermediate atomic gradients to generate approximations of the full gradient and, ultimately, achieve variance reduction.

In~\cite{johnson2013accelerating},
Johnson and Zhang improve upon~\cite{roux2012stochastic} with their Stochastic Variance-Reduced Gradient (\textsc{Svrg}) method, which both achieves linear convergence rates and does not require the storage of the full history of atomic gradients. However, \textsc{Svrg} requires a full gradient computation per epoch.
The \textsc{S2gd} method of~\cite{konevcny2013semi} follows similar steps with \textsc{Svrg},
with the main difference lying in the number of iterations within each epoch, which is chosen according to a specific geometric law.
Both~\cite{johnson2013accelerating} and~\cite{konevcny2013semi} rely on the assumptions that $F(\cdot)$ is strongly convex and $f_i(\cdot)$'s are smooth.

Recently, Defazio et al. propose \textsc{Saga} \cite{defazio2014saga}, a fast incremental gradient method in the spirit of \textsc{Sag} and \textsc{Svrg}. 
\textsc{Saga} works for both strongly and plain convex objective functions, as well as in proximal settings.
However, similarly to its predecessor~\cite{roux2012stochastic}, it does not admit low storage cost.

While writing this paper, we also became aware of the independent work of \cite{frostig2014competing} and \cite{harikandeh2015stopwasting}, where similar ideas are developed. In the first case, the authors consider a streaming version of \textsc{Svrg} algorithm and, the purpose of that research is different from the present work, as well as the theoretical analysis followed. The results presented in the latter work are of the same flavor with ours. 


Finally, we note that proximal~\cite{konevcny2014ms2gd, defazio2014saga, allen2015univr, xiao2014proximal} and distributed \cite{reddi2015variance, lee2015distributed, zhang2015fast} variants have also been proposed for such stochastic settings.
We leave these variations out of comparison and consider similar extensions to our approach as future work. 

%
%
\section{Our variance reduction scheme}

We consider the minimization in~\eqref{intro:eq_00}.
In the $k$th iteration, vanilla \SGD generates a new estimate
\begin{align*}
	\w_{k} =  \w_{k-1}  - \eta_{k} \cdot \nabla f_{i_{k}} (\w_{k-1}),
\end{align*}
based on the previous estimate $\w_{k-1}$
and
the atomic gradient of a component $f_{i_{k}}$, 
where index ${i_k}$ is selected uniformly at random from $\left\{1, \dots, \numsam\right\}$.
The intuition behind \SGD is that 
in expectation
its update direction aligns with the gradient descent update. 
But, contrary to gradient descent, \SGD is not guaranteed to move towards the optimum in each single iteration.
To guarantee convergence, it employs a decaying sequence of step sizes $\eta_k$, which in turn impacts the rate at which convergence occurs.

\textsc{Svrg}~\cite{johnson2013accelerating}
alleviates the need for decreasing step size by 
dividing time into epochs and 
interleaving a computation of the full gradient between consecutive epochs.
The full gradient information 
 $
 	{\widetilde{\boldsymbol{\mu}}
 	=
 	\sfrac{1}{\numsam} \sum_{i = 1}^{\numsam} \nabla f_i(\wt_{t})},
 $
 where $\wt_{t}$ is the estimate available at the beginning of the $t$th epoch, is used to steer the subsequent steps 
 and counterbalance the variance introduced by the randomness of the stochastic updates.
 Within the $t$th epoch, 
 \textsc{Svrg} computes a sequence of estimates
 $
 	\w_{k} =  \w_{k-1}  - \eta \cdot \v_k,
$
where ${\w_0 = \wt_{t}}$,
and
\begin{align*}
	\v_k
	=
	\nabla f_{i_{k}}(\w_{k-1})
	- \nabla f_{i_{k}} (\widetilde{\w}) 
	+ \widetilde{\boldsymbol{\mu}}
\end{align*}
is a linear combination of full and atomic gradient information. 
Based on this sequence, it computes the next estimate $\wt_{t+1}$, which is passed down to the next epoch.
Note that $\v_k$ is an \emph{unbiased} estimator of the gradient $\nabla F(\w_{k-1})$, \textit{i.e.}, $\Ex{i_k}{\v_k} = \nabla F(\w_{k-1})$.

As the number of components $f_{i}(\cdot)$ grows large,
the computation of the full gradient~$\mt$, at the beginning of each epoch, becomes a computational bottleneck.
A natural alternative is to compute a surrogate $\mt_{\S}$, using only a small subset $\S \subset [\numsam]$ of the input data. 

\paragraph{Our scheme.}
We propose \algo,
a variance-reduction stochastic optimization scheme.
Our algorithm can be seen as a unifying scheme of existing stochastic methods including \textsc{Svrg} and vanilla \SGD.
Its steps are outlined in Alg.~\ref{algo:beaverSGD}.

\begin{algorithm}[!t]
	\caption{\algo}
	\label{algo:beaverSGD}
	{
	\begin{algorithmic}[1]
	   	\STATE \textbf{Input}: $\widetilde{\w}_0, \eta, s, K, T$.
	   	\STATE \textbf{Output}: $\wt_T$.
	   	\FOR{ $t = 1, 2, \dots, T$ }
	   		\STATE Randomly select $\S_{t} \subset [\numsam]$ with cardinality $s$.
	   		\STATE Set $\widetilde{\w} = \widetilde{\w}_{t-1}$ and $\S = \S_{t}$.
				\STATE $\widetilde{\boldsymbol{\mu}}_\S = \sfrac{1}{s} \sum_{i \in \S} \gradf_i(\wt)$.
				\STATE $\w_0 = \widetilde{\w}$.
				\FOR{ $k = 1, \dots, K-1$ }
					\STATE Randomly select $i_k \subset [\numsam]$.
					\STATE $\v_k = \nabla f_{i_k}(\w_{k-1}) - \nabla f_{i_k} (\widetilde{\w}) + \widetilde{\boldsymbol{\mu}}_\S$.
					\STATE $\w_{k} = \w_{k-1} - \eta \cdot \v_k$.
				\ENDFOR
				\STATE $\widetilde{\w}_t = \frac{1}{K} \sum_{j = 0}^{K-1} \w_{j}$.
			\ENDFOR
	\end{algorithmic}
	}
\end{algorithm}

\algo divides time into epochs.
The~\mbox{$t$th} epoch begins at an estimate $\wt=\widetilde{\w}_{t-1}$, inherited from the previous epoch.
For the first epoch, that estimate is given as input, $\widetilde{\w}_0 \in \mathbb{R}^{p}$.
The algorithm selects a set $\S_{t} \subseteq [\numsam]$ uniformly at random, with cardinality~$s$, for some parameter $0 \le s \le n$.
Using only the components of $F(\cdot)$ indexed by $\S$, it computes
\begin{align}
\widetilde{\boldsymbol{\mu}}_\S
\eqdef
\frac{1}{s} \sum_{i \in \S} \gradf_i(\wt),
\label{eq:mt_S}
\end{align} 
a surrogate of the full-gradient $\mt$.

Within the $t$th epoch, the algorithm generates a sequence of $K$ estimates $\w_{k}$, $k=1,\hdots, K$,  
through an equal number of \SGD-like iterations, using a modified, `biased' update rule.
Similarly to \textsc{Svrg},
starting from $\w_{0} = \wt$,
in the $k$th iteration, it computes
$$
\w_{k} = \w_{k-1} - \eta \cdot \v_k,
$$
where
$\eta > 0$ is a constant step-size
and
\begin{align*}
	\v_k = \nabla f_{i_{k}}(\w_{k-1}) - \nabla f_{i_{k}} (\widetilde{\w}) + \widetilde{\boldsymbol{\mu}}_\S.
\end{align*}
The index $i_k$ is selected uniformly at random
from $[\numsam]$, independently across iterations.\footnote{In the Appendix, we also consider the case where the inner loop uses a mini-batch $\Q_k$ instead of a single component $i_{k}$. The cardinality $q=|\Q_k|$ is a user parameter.}
The estimates obtained from the iterations of the inner loop (lines $8$-$12$),
are averaged to yield the estimate $\wt_{t}$ of the current epoch, and is used to initialize the next.

Note that during this \SGD phase,
the index set $\S$ is fixed. 
Taking the expectation w.r.t. index $i_{k}$,
we have
\begin{align*}
	\Ex{i_k}{\v_k} = \nabla F(\w_{k-1}) - \nabla F(\wt) + \mt_\S.
\end{align*}
Unless ${\S = [\numsam]}$, the update direction $\v_{k}$ is a biased estimator of $\nabla F(\w_{k-1})$.
This is a key difference from the update direction used by \textsc{Svrg} in~\cite{johnson2013accelerating}.
Of course, since $\S$ is selected uniformly at random in each epoch, then across epochs 
$
	\Ex{\S}{\mt_\S} = \nabla F(\wt),
$
where the expectation is with respect to the random choice of~$\S$.
Hence, on expectation, the update direction $\v_{k}$ can be considered an unbiased surrogate of $\nabla{F(\w_{k-1})}$.

Our algorithm can be seen as a unifying framework,
encompassing existing stochastic optimization methods.
If the tunning parameter $s$ is set equal to $0$,
the algorithm reduces to vanilla \SGD,
while for $s=n$, we recover \textsc{Svrg}.
Intuitively, $s$ establishes a trade-off between the quality of the full-gradient surrogate generated at the beginning of each epoch and the associated computational cost.
\section{Convergence analysis}{\label{sec:analysis}}


In this section, we provide a theoretical analysis of our algorithm under standard assumptions,
along the lines of~\cite{schmidt2014convergence, nedic2001convergence}.
We begin by defining those assumptions and the notation used in the remainder of this section.

\paragraph{Notation.}
We use ${[n]}$ to denote the set $\left\{1, \dots, {n}\right\}$.
For an index $i$ in $[\numsam]$,
$\nabla f_{i}(\w)$ denotes the atomic gradient on the $i$th component $f_{i}$.
We use $\Ex{i}{\cdot}$ to denote the expectation with respect the random variable~$i$.
With a slight abuse of notation,
we use $\Ex{[i]}{\cdot}$ to denote the expectation with respect to $i_1, \dots, i_{K-1}$.

\paragraph{Assumptions.}
Our analysis is based on the following assumptions,
which are common across several works in the
stochastic optimization literature.
\begin{assumption}[Lipschitz continuity of $\nabla f_i$]
{\label{ass:00}}
Each~$f_i$ in~\eqref{intro:eq_00} has $L$-Lipschitz continuous gradients, \textit{i.e.},
there exists a constant $L>0$ such that for any $\w, \w^{\prime} \in \R^\dim$,
\begin{align*}
	f_i(\w) \leq f_i(\w') + \gradf_i(\w')^\top (\w-\w') + \tfrac{L}{2} \norm{\w-\w'}^2_2.
\end{align*}
\end{assumption}

\begin{assumption}[Strong convexity of $F$]
\label{ass:01}
The function
${F(\w) = \sfrac{1}{\numsam} \sum_{i = 1}^{\numsam} f_i(\w)}$
is $\gamma$-strongly convex for some constant $\gamma > 0$, \textit{i.e.}, for any $\w, \w^{\prime} \in \R^\dim$,
\begin{align*}
F(\w) -  F(\w') - \nabla F(\w')^\top (\w-\w') \geq \tfrac{\gamma}{2} \norm{\w-\w'}^2_2.
\end{align*}
\end{assumption}

\begin{assumption}[Component-wise bounded gradient]
\label{ass:02}
There exists ${\xi > 0}$ such that
$\|\nabla f_i(\w)\|_2 \leq \xi$, $\forall \w$ in the domain of $f_i$,
for all $i \in [n]$. 
\end{assumption}
Observe that Asm.~\ref{ass:02} is satisfied if the components $f_{i}(\cdot)$ are $\xi$-Lipschitz functions.
Alternatively, Asm.~\ref{ass:02} is satisfied when $F(\cdot)$ is $\xi'$-Lipschitz function and $\max_i\left\{\|\nabla f_i(\w)\|_2 \right\} \leq C \cdot \|\nabla F(\w)\|_2 \leq C \cdot \xi' =: \xi$. This is known as the \emph{strong growth condition} \cite{schmidt2013fast}.\footnote{This condition is rarely satisfied in many practical cases.
However, similar assumptions have been used to show convergence of Gauss-Newton-based schemes \cite{bertsekas1999nonlinear}, as well as deterministic incremental gradient methods \cite{solodov1998incremental, tseng1998incremental}.}

\begin{assumption}[Bounded Updates]
\label{ass:03}
For each of the estimates $\w_k, ~k = 0, \dots, K-1$, 
we assume that the expected distance $\mathbb{E}\left[\|\w_k - \ws\|_2\right]$ is upper bounded by a constant.
Equivalently, there exists $\zeta > 0$ such that
\begin{align*}
	\sum_{j = 0}^{K-1} \Ex{[i]}{\|\w_j - \ws\|_2} \leq \zeta.
\end{align*}
\end{assumption}
We note that Asm.~\ref{ass:03} is non-standard, but was required for our analysis.
An analysis without this assumption is an interesting open problem.

\subsection{Guarantees}
We show that, under Asm.~\ref{ass:00}-\ref{ass:03}, the algorithm will converge --in expectation-- with respect to the objective value, achieving a linear rate, up to a constant neighborhood of the optimal, depending on the configuration parameters and the problem at hand.
Similar results have been reported for \SGD~\cite{schmidt2014convergence}, as well as deterministic incremental gradient methods \cite{nedic2001convergence}.

\begin{theorem}[Convergence]
\label{theorem:convergence}
Let $\ws$ be the optimal solution for minimization~\eqref{intro:eq_00}.
Further, let $s$, $\eta$, $T$ and $K$ be user defined parameters such that
\begin{align*}
	\rho \eqdef 
	\frac{1}{\eta \cdot \left(1 - {4 L \cdot \eta}\right) \cdot {K \cdot \gamma}} + \frac{4L \cdot \eta \cdot\left(1 + \sfrac{1}{s}\right)}{\left(1 - 4 L \cdot \eta\right)} 
	< 1.
\end{align*}
Under Asm.~\ref{ass:00}-\ref{ass:03}, \algo
outputs $\wt_T$ such that
\begin{align*}
	&
	\mathbb{E}\bigl[ F(\wt_T) - F(\ws) \bigr] \leq \rho^{T} \cdot \left(F(\wt_0)-F(\ws)\right) + \kappa,
\end{align*} where 
$\kappa \eqdef \frac{1}{1 - 4 L \eta} \cdot \left(\frac{2\eta}{s} + \frac{\zeta}{K}\right) \cdot \max\left\{\xi, \xi^2\right\} \cdot \frac{1}{1 - \rho}$.
\end{theorem}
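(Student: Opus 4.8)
The plan is to follow the epoch-based template of the \textsc{Svrg} analysis~\cite{johnson2013accelerating}, but to carry the subsampling bias $\mathbf{b} \eqdef \mt_\S - \gradF(\wt)$ along as an explicit error term throughout. Concretely, I would first prove a single-epoch contraction of the form $\E{F(\wt_t) - F(\ws)} \le \rho\cdot\E{F(\wt_{t-1}) - F(\ws)} + \kappa'$ for a per-epoch additive error $\kappa'$, and then unroll it over the $T$ epochs; the geometric sum $\sum_{i\ge 0}\rho^i\kappa' = \kappa'/(1-\rho)$ is exactly what produces the factor $\tfrac{1}{1-\rho}$ in $\kappa$, so I would set $\kappa' = (1-\rho)\kappa$ and spend the bulk of the effort on the single-epoch bound.

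For one epoch, fix $\wt = \wt_{t-1}$ and the sample set $\S$, and track $\norm{\w_k - \ws}^2$ across the inner loop. Expanding the update $\w_k = \w_{k-1} - \eta\v_k$ and taking $\Ex{i_k}{\cdot}$ gives the identity
\[
\Ex{i_k}{\norm{\w_k - \ws}^2} = \norm{\w_{k-1} - \ws}^2 - 2\eta\,\ip{\w_{k-1} - \ws}{\Ex{i_k}{\v_k}} + \eta^2\,\Ex{i_k}{\norm{\v_k}^2}.
\]
The crucial departure from \textsc{Svrg} is that $\Ex{i_k}{\v_k} = \gradF(\w_{k-1}) + \mathbf{b}$ is biased. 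I would split the inner product into a clean part $-2\eta\,\ip{\w_{k-1}-\ws}{\gradF(\w_{k-1})} \le -2\eta\,(F(\w_{k-1}) - F(\ws))$, handled by convexity, and a bias part $-2\eta\,\ip{\w_{k-1}-\ws}{\mathbf{b}} \le 2\eta\,\norm{\w_{k-1}-\ws}\,\norm{\mathbf{b}}$ via Cauchy--Schwarz, using the deterministic bound $\norm{\mathbf{b}}\le 2\xi$ from Asm.~\ref{ass:02}. For the second moment, writing $\v_k = \bigl(\gradf_{i_k}(\w_{k-1}) - \gradf_{i_k}(\wt) + \gradF(\wt)\bigr) + \mathbf{b}$, applying $\norm{\v_k}^2 \le 2\norm{\v_k - \mathbf{b}}^2 + 2\norm{\mathbf{b}}^2$, and invoking the smoothness bound $\Ex{i_k}{\norm{\gradf_{i_k}(\w) - \gradf_{i_k}(\ws)}^2}\le 2L(F(\w) - F(\ws))$ from Asm.~\ref{ass:00} would yield a bound of the form $\Ex{i_k}{\norm{\v_k}^2} \le 8L\,(F(\w_{k-1}) - F(\ws)) + 8L\,(F(\wt) - F(\ws)) + 2\norm{\mathbf{b}}^2$.

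I would then sum over $k = 1,\dots,K$, telescoping so that $\sum_{k}\bigl(\norm{\w_{k-1}-\ws}^2 - \norm{\w_k-\ws}^2\bigr) \le \norm{\wt - \ws}^2$, take $\Ex{[i]}{\cdot}$ and $\Ex{\S}{\cdot}$, and divide by $2\eta(1-4L\eta)K$, which matches the coefficient $-2\eta + 8L\eta^2 = -2\eta(1-4L\eta)$ of $F(\w_{k-1})-F(\ws)$. Three reductions close the loop. Jensen's inequality gives $F(\wt_t) \le \tfrac1K\sum_j F(\w_j)$, so the averaged left side lower-bounds $F(\wt_t) - F(\ws)$; strong convexity (Asm.~\ref{ass:01}) gives $\norm{\wt - \ws}^2 \le \tfrac{2}{\gamma}(F(\wt) - F(\ws))$, producing the $\tfrac{1}{\eta(1-4L\eta)K\gamma}$ term of $\rho$; and the subsampling variance $\Ex{\S}{\norm{\mt_\S - \gradF(\wt)}^2}$ decomposes into a smoothness-controlled part bounded by $\tfrac{2L}{s}(F(\wt)-F(\ws))$ and a constant residual bounded by $\tfrac{\xi^2}{s}$. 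The former merges with the base coefficient $\tfrac{4L\eta}{1-4L\eta}$ of $F(\wt)-F(\ws)$ to give the $(1+\sfrac1s)$ factor in the second term of $\rho$, while the latter supplies the $\tfrac{\eta}{s}\xi^2$ contribution to $\kappa$. Finally, the bias cross-term contributes, after dividing, a quantity proportional to $\tfrac{\xi}{(1-4L\eta)K}\sum_j\Ex{[i]}{\norm{\w_j-\ws}}$, and this is where Asm.~\ref{ass:03} enters to bound the sum by $\zeta$, giving the $\tfrac{\zeta}{K}\xi$ contribution; overestimating both residuals by $\max\{\xi,\xi^2\}$ collapses them into the stated $\kappa'$.

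The main obstacle is precisely the bias $\mathbf{b} = \mt_\S - \gradF(\wt)$, which destroys the unbiasedness that the \textsc{Svrg} argument leans on. One might hope to use $\Ex{\S}{\mathbf{b}} = 0$ to kill the cross-term in expectation, but within an epoch $\w_{k-1}$ is itself a function of $\S$ through $\mt_\S$ in every update, so $\w_{k-1}-\ws$ and $\mathbf{b}$ are correlated and the mean-zero cancellation is unavailable. This is exactly why Asm.~\ref{ass:03} is imposed: it lets me absorb $\sum_k\norm{\w_{k-1}-\ws}\,\norm{\mathbf{b}}$ through the deterministic bound $\norm{\mathbf{b}}\le 2\xi$ together with the summed-distance budget $\zeta$, at the cost of an additive $O(\zeta/K)$ floor rather than a vanishing term. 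Verifying that all residual terms assemble exactly into the claimed $\kappa$ and that the two coefficients of $F(\wt)-F(\ws)$ combine into $\rho$ is then careful bookkeeping of constants.
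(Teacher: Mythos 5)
Your proposal follows essentially the same route as the paper's proof: the same inner-loop expansion of $\Ex{i_k}{\norm{\w_k-\ws}_2^2}$ with the biased $\Ex{i_k}{\v_k}$, the same smoothness-based second-moment bound, the same telescoping sum followed by Jensen and strong convexity, the same use of Asm.~\ref{ass:03} with Cauchy--Schwarz to absorb the bias cross-term (correctly noting that $\Ex{\S}{\mt_\S-\gradF(\wt)}=0$ cannot be exploited because the iterates within an epoch depend on $\S$), and the same geometric unrolling producing the $\tfrac{1}{1-\rho}$ factor. The only differences are cosmetic bookkeeping -- you carry the bias $\mt_\S-\gradF(\wt)$ explicitly where the paper tracks $\norm{\mt_\S}_2^2$ and $\norm{\gradF(\wt)-\mt_\S}_2$ separately, which shifts some constants but yields the same $\rho$ and $\kappa$ structure.
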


We remark the following:
\begin{itemize}[leftmargin=0.5cm]
\item [$(i)$]
The condition $\rho < 1$ ensures convergence up to a neighborhood around~$\w^\star$.
In turn, we require that
\begin{small}
\begin{align*}
\eta < \dfrac{1}{4L\left((1 + \theta)+ \sfrac{1}{s}\right)} ~\text{and}~ K > \frac{1}{(1 - \theta) \eta \left(1 - 4 L \eta\right) \gamma},
\end{align*}
\end{small}%
for appropriate $\theta \in (0, 1)$.

\item [$(ii)$]
The value of $\rho$ in Thm.~\ref{theorem:convergence} is similar to that of~\cite{johnson2013accelerating}: for sufficiently large $K$, there is a $(1 + \sfrac{1}{s})$-factor deterioration in the convergence rate, due to the parameter~$s$.
We note, however, that our result differs from~\cite{johnson2013accelerating}
in that Thm.~\ref{theorem:convergence} guarantees convergence \emph{up to a neighborhood around~$\w^\star$}.
To achieve the same convergence rate with \cite{johnson2013accelerating}, we require ${\kappa =O(\rho^T)}$,
which in turn implies that $s = \Omega(\numsam)$. To see this, consider a case where the condition number $L$ is constant and $\sfrac{L}{\gamma} = \numsam$.
Based on the above, we need $K = \Omega(n)$.
This further implies that, in order to bound the additive term in Thm.~\ref{theorem:convergence}, $s = \Omega(n)$ is required for $O(\rho^T) \ll 1$.

\item [$(iii)$] When $\xi$ is sufficiently small, Thm.~\ref{theorem:convergence} implies that
$$
\mathbb{E}\bigl[ F(\wt_T) - F(\ws) \bigr] \lesssim \rho^{T} \cdot \left(F(\wt_0)-F(\ws)\right), 
$$
\textit{i.e.}, that even ${s = 1}$ leads to (linear) convergence; 
In Sec.~\ref{sec:experiments}, we empirically show cases where even for ${s = 1}$, our algorithm works well in practice.
\end{itemize}

The following theorem establishes the \emph{analytical complexity} of \algo; the proof is provided in the Appendix.

\begin{theorem}[Complexity]\label{theorem:complexity}
For some accuracy parameter $\epsilon$,
if $\kappa \leq \sfrac{\epsilon}{2}$,
then for 
suitable $\eta$, $K$,
and
\begin{align*}
T
\geq
\left(\log \tfrac{1}{\rho}\right)^{-1} \cdot \log 
\Bigl(
	\tfrac{2\left(F(\wt_0) - F(\ws)\right)}{\epsilon}
\Bigr),
\end{align*} 
Alg.~\ref{algo:beaverSGD} 
outputs $\wt_T$ such that $\E{F(\wt_T) - F(\ws)} \leq \epsilon$.
 Moreover, the total complexity is $O\left( \left(2K + s \right)\log \sfrac{1}{\epsilon}\right)$ atomic gradient computations.
\end{theorem}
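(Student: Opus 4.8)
The plan is to obtain Theorem~\ref{theorem:complexity} as a corollary of the convergence guarantee in Theorem~\ref{theorem:convergence}, combined with a straightforward per-epoch accounting of atomic gradient evaluations. First I would start from the bound
\begin{align*}
\E{F(\wt_T) - F(\ws)} \leq \rho^T \cdot \left(F(\wt_0) - F(\ws)\right) + \kappa,
\end{align*}
which holds under Asm.~\ref{ass:00}--\ref{ass:03} once $\eta$ and $K$ are chosen so that $\rho < 1$ (the ``suitable'' regime described in the remark after Theorem~\ref{theorem:convergence}, namely $\eta < \tfrac{1}{4L((1+\theta) + \sfrac{1}{s})}$ with $K$ large enough). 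The hypothesis $\kappa \leq \sfrac{\epsilon}{2}$ then neutralizes the additive term: it suffices to drive the geometric term below $\sfrac{\epsilon}{2}$ to obtain total error at most $\epsilon$.

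Second, I would solve $\rho^T \cdot (F(\wt_0) - F(\ws)) \leq \sfrac{\epsilon}{2}$ for $T$. Taking logarithms and using that $\log\rho < 0$ (so the inequality direction flips upon division), this rearranges to
\begin{align*}
T \geq \frac{\log\bigl(\tfrac{2(F(\wt_0) - F(\ws))}{\epsilon}\bigr)}{\log(1/\rho)},
\end{align*}
which is exactly the stated lower bound, since $\log\rho = -\log(1/\rho)$. Combining this choice of $T$ with $\kappa \leq \sfrac{\epsilon}{2}$ yields $\E{F(\wt_T) - F(\ws)} \leq \sfrac{\epsilon}{2} + \sfrac{\epsilon}{2} = \epsilon$, establishing the accuracy claim.

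Third, for the complexity count I would tally the atomic gradient computations performed in a single epoch of Alg.~\ref{algo:beaverSGD}. The surrogate $\mt_\S$ in line~6 costs $s$ atomic gradients $\gradf_i(\wt)$, one per index $i \in \S$. Each of the $K-1$ inner iterations (lines~8--12) forms $\v_k$ using two atomic gradients, $\gradf_{i_k}(\w_{k-1})$ and $\gradf_{i_k}(\wt)$, for a total of $2(K-1)$. Hence one epoch requires $s + 2(K-1) = O(2K + s)$ atomic gradient evaluations. Multiplying by the number of epochs $T = O(\log \sfrac{1}{\epsilon})$ gives the claimed total of $O\bigl((2K + s)\log \sfrac{1}{\epsilon}\bigr)$.

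I do not anticipate a substantive obstacle; both parts are essentially bookkeeping on top of Theorem~\ref{theorem:convergence}. The only point requiring care is the feasibility of the ``suitable $\eta$, $K$'': one must confirm that the parameters can be chosen to satisfy $\rho < 1$ and $\kappa \leq \sfrac{\epsilon}{2}$ simultaneously, since $\kappa$ itself depends on $\eta$, $K$, and $s$ through the factor $\bigl(\tfrac{2\eta}{s} + \tfrac{\zeta}{K}\bigr) \cdot \tfrac{\max\{\xi,\xi^2\}}{(1-4L\eta)(1-\rho)}$. Shrinking $\eta$ and enlarging $K$ (and, if needed, $s$) drives $\kappa$ down while preserving $\rho < 1$, so the hypothesis is consistent with the convergence regime and the two conditions can indeed be met together.
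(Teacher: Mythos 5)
Your proposal is correct and follows essentially the same route as the paper's own proof: invoke Theorem~\ref{theorem:convergence}, use $\kappa \leq \sfrac{\epsilon}{2}$ to absorb the additive term, solve $\rho^T \varphi_0 \leq \sfrac{\epsilon}{2}$ for $T$, and count $s$ surrogate-gradient plus two atomic gradients per inner iteration to get $O\left((2K+s)\log\sfrac{1}{\epsilon}\right)$. Your closing remark on the joint feasibility of $\rho<1$ and $\kappa\leq\sfrac{\epsilon}{2}$ is a small point of care the paper glosses over, but it does not change the argument.
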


\subsection{Proof of Theorem \ref{theorem:convergence}}
We proceed with an analysis of Alg.~\ref{algo:beaverSGD},
and in turn the proof of Thm.~\ref{theorem:convergence}, starting from its core inner loop (Lines $8$-$12$)
and show that in expectation, the steps of the inner loop make progress towards the optimum point.
Then, we move outwards to the `wrapping' loop that defines consecutive epochs.

\textbf{Inner loop.} Fix an epoch, say the $t$th iteration of the outer loop.
Starting from a point $\w_0 \in \mathbb{R}^{p}$ (which is effectively the estimate of the previous epoch),
the inner loop performs $K$ steps, 
using the partial gradient information vector $\mt_\S \in \mathbb{R}^{p}$, for a fixed set $\S$.

Consider the $k$th iteration of the inner loop.
For now, let the estimates generated during previous iterations be known history.
By the update in Line $13$, we have
\begin{align}
	\mathbb{E}_{i_k}\mathopen{}
	\Bigl[
	\left\|
		\w_k -\ws
	\right\|_{2}^{2}
	\Bigr]
	= 
	\left\| \w_{k-1}-\ws \right\|_{2}^{2} 
	- {2\eta \cdot \left(\w_{k-1} - \ws \right)^\top\mathbb{E}_{i_k}\mathopen{}\bigl[ \v_k \bigr]
	+ 
	\eta^2 \cdot \mathbb{E}_{i_k}\mathopen{}
	\bigl[\norm{\v_k}_2^2 \bigr]},
	\label{eq:001}
\end{align} 
where the expectation is with respect to the choice of index~$i_k$. 
We develop an upper bound for the right hand side of~\eqref{eq:001}.
By the definition of $\v_k$ in Line 10, 
\begin{align}
	\mathbb{E}_{i_k}\mathopen{}
	\left[
		\v_{k}
	\right]
	&=
	\mathbb{E}_{i_k}\mathopen{}
	\left[
		\nabla f_{i_k}(\w_{k-1}) - \nabla f_{i_k}(\wt) + \mt_\S
	\right] \nonumber\\
	&=
	\nabla F(\w_{k-1}) - \nabla F(\wt) + \mt_\S,
	\label{eq:expected_v_k}
\end{align}
Similarly,
\begin{align} 
\mathbb{E}_{i_{k}}\mathopen{}
	\Bigl[
		\left\| \v_{k} \right\|_{2}^{2}
	\Bigr]
	&=
	\mathbb{E}_{i_{k}}\mathopen{}
	\left[
		\left\|
			\nabla f_{i_{k}}(\w_{k-1}) - \nabla f_{i_{k}}(\wt) + \mt_\S
		\right\|_{2}^{2}
	\right]
	\nonumber \\ 
	& \stackrel{}{\leq}
	4\cdot 
	\mathbb{E}_{i_{k}}\mathopen{}
	\left[
		\|\nabla f_{i_{k}}(\w_{k-1}) - \nabla f_{i_{k}}(\ws)\|_2^2
	\right] +
		4\cdot
		\mathbb{E}_{i_{k}}\mathopen{}
		\left[
			\left\|\nabla f_{i_{k}}(\wt) - \nabla f_{i_{k}}(\ws)\right\|_{2}^{2}
		\right]
		+2 \cdot \|\mt_\S\|_2^2 
		\nonumber\\
	&\stackrel{}{\leq}
		8L \cdot
	\left(
		F(\w_{k-1}) - F(\ws) + F(\wt) - F(\ws)
	\right) + 2\cdot \|\mt_\S\|_{2}^{2}.
	\label{bound_on_expected_norm_of_v_k}
\end{align}
The first inequality follows from the fact that
$\norm{\x - \y}_2^2 \leq 2\norm{\x}_2^2 + 2\norm{\y}_2^2$,
for any $\mathbf{x}$, $\mathbf{y} \in \mathbb{R}^{p}$.
The second inequality is due to eq. (8) in~\cite{johnson2013accelerating}. 
Continuing from~\eqref{eq:001} and taking into account~\eqref{eq:expected_v_k} and \eqref{bound_on_expected_norm_of_v_k}, we have
\begin{align}
\!\!
	\mathbb{E}_{i_{k}}\mathopen{}
	\left[
	\left\|
		\w_k -\ws
	\right\|_{2}^{2}
	\right]
&\leq \norm{\w_{k-1}-\ws}_2^2 
	- 2\eta \cdot (\w_{k-1} - \ws)^\top \left(\nabla F(\w_{k-1}) - \nabla F(\wt) + \mt_\S\right) \nonumber\\
&\;\;\;\;
	+ 8 L \eta^2 \cdot \left(F(\w_{k-1}) - F(\ws) + F(\wt) - F(\ws)\right)
	+ 2\eta^2 \cdot \|\mt_\S\|_2^2.
	\label{progress_inner_ik}
\end{align}
Inequality~\eqref{progress_inner_ik} establishes an upper bound on the distance of the $k$-th iteration estimate $\w_{k}$ from the optimum, conditioned on the random choices $[i_{k-1}] = \lbrace{ i_1, i_2, \dots, i_{k-1} }\rbrace$ of previous iterations.
Taking expectation over $[i_{k}]$,
for any $k = 1, \dots, K - 1$, we have
\begin{small}\begin{align}
	\mathbb{E}_{[i]}\mathopen{}
	\left[
		\| \w_k -\ws \|_{2}^{2}
	\right] 
	&\leq 
	\mathbb{E}_{[i]}\mathopen{}
	\left[
		\| \w_{k-1}-\ws \|_{2}^{2}
	\right]
	- 2\eta \cdot 
	\mathbb{E}_{[i]}\mathopen{}
	\left[
		(\w_{k-1} - \ws)^\top \left(\nabla F(\w_{k-1}) - \nabla F(\wt) + \mt_\S\right)
	\right] 
	\nonumber\\
	&\;\;\;\;
	+ 8 L \eta^2 \cdot \left(\Ex{[i]}{F(\w_{k-1})} - F(\ws) + F(\wt) - F(\ws)\right) 
	+ 2\eta^2 \cdot \|\mt_\S\|_2^2.
\end{align} \end{small}%
Note that $F(\ws)$, $F(\wt)$, $\nabla F(\wt)$,
$\|\mt_\S\|_2^2$ and, $\|\w_0 - \ws\|_2^2$ are constants w.r.t. $[i]$.
Summing over $k$,
\begin{small}\begin{align}
\Ex{[i]}{\norm{\w_K -\ws}_2^2} &\leq \norm{\w_0-\ws}_{2}^{2} 
- 2\eta \cdot \sum_{j = 0}^{K - 1} \Ex{[i]}{(\w_{j} - \ws)^\top \left(\nabla F(\w_{j}) - \nabla F(\wt) + \mt_\S\right)} \nonumber\\
&\;\;\;\;
+ 8 L \eta^2 \cdot \sum_{j = 0}^{K - 1} \left(\Ex{[i]}{F(\w_j)} - F(\ws)\right) 
+ 8KL \eta^2 \left(F(\wt) - F(\ws)\right) + 2 \eta^2 \cdot K \cdot \|\mt_\S\|_2^2.
\label{main_ineq_K}
\end{align}\end{small}%
For the second term on the right hand side, we have:
\begin{small}\begin{align}
	-\sum_{j = 0}^{K - 1} \Ex{[i]}{(\w_{j} - \ws)^\top \left(\nabla F(\w_{j}) - \nabla F(\wt) + \mt_\S\right)} 
	\leq \sum_{j = 0}^{K-1} \left(\Ex{[i]}{F(\w_j)} - F(\ws)\right)
	+ \sum_{j = 0}^{K - 1} \Ex{[i]}{(\w_{j} - \ws)^\top\left(\nabla F(\wt) - \mt_\S\right)}
	\label{bound_on_second_term}
\end{align}\end{small}%
where the inequality follows from the convexity of $F(\cdot)$. 
Continuing from~\eqref{main_ineq_K},
taking into account~\eqref{bound_on_second_term}
and the fact that
$
\mathbb{E}_{[i]}
\bigl[
	\norm{\w_K -\ws}_{2}^{2}
\bigr]
\geq 0,
$
we obtain
\begin{small}%
\begin{align}
	2\eta\left(1 - 4 L \eta \right) \cdot \sum_{j = 0}^{K - 1} \left(\Ex{[i]}{F(\w_j)} - F(\ws)\right) 	
	&\leq \norm{\w_0-\ws}_2^2 
	+ 2\eta \cdot \sum_{j = 0}^{K - 1} \Ex{[i]}{(\w_{j} - \ws)^\top \left(\nabla F(\wt) - \mt_\S\right)} \nonumber \\
	&\;\;\;\;
	+ 8KL \eta^2 \left(F(\wt) - F(\ws)\right) + 2\eta^2 \cdot K \cdot \|\mt_\S\|_{2}^{2}.
	\label{eq:main_ineq}
\end{align}%
\end{small}%
By the convexity of $F(\cdot)$,
$$
	F\left( \wt_t \right) 
	=
	F\Bigl(
		\frac{1}{K} \sum_{j = 0}^{K-1} \w_j 
	 \Bigr)
	\leq
	\frac{1}{K} \sum_{j = 0}^{K-1} F(\w_j).
$$ 
Also, by the strong convexity of $F(\cdot)$ (Asm.~\ref{ass:01}),
$$
	\|\w_0 - \ws\|_2 
	\leq
	\sfrac{2}{\gamma} \cdot
	\left(F(\w_{0}) - F(\ws)\right).
$$ 
Continuing from~\eqref{eq:main_ineq},
taking into account the above and 
recalling that $\w_{0}=\wt = \wt_{t-1}$,
we obtain
\begin{small}
\begin{align}
B &\eqdef
2\eta\left(1 - 4 L \eta \right) \cdot K \cdot \Ex{[i]}{F(\wt_t) - F(\ws)} \nonumber \\ 
	&\leq
	 \left(\frac{2}{\gamma} + 8KL \eta^2 \right) \left(F(\wt_{t-1}) - F(\ws)\right) 
	+ 2\eta^2 \cdot K \cdot \|\mt_\S\|_2^2
	+ 2\eta \cdot \sum_{j = 0}^{K - 1} \Ex{[i]}{(\w_{j} - \ws)^\top \left(\nabla F(\wt_{t-1}) - \mt_\S\right)}. 
	\label{eq:003}
\end{align}
\end{small}%
The last sum in~\eqref{eq:003} can be further upper bounded:
\begin{align}
\sum_{j = 0}^{K - 1} \Ex{[i]}{(\w_{j} - \ws)^\top \left(\nabla F(\wt_{t-1}) - \mt_\S\right)} 
&\leq
	\sum_{j = 0}^{K - 1} \Ex{[i]}{\|\w_{j} - \ws\|_2 \|\nabla F(\wt_{t-1}) - \mt_\S\|_2}  \nonumber\\
&\leq
	\|\nabla F(\wt_{t-1}) - \mt_\S\|_2 \cdot \sum_{j = 0}^{K - 1} \Ex{[i]}{\|\w_{j} - \ws\|_2} \nonumber\\
&\leq
	\zeta \cdot \|\nabla F(\wt_{t-1}) - \mt_\S\|_2. \nonumber
\end{align} 
The first inequality follows from Cauchy-Schwarz,
the second from the fact that $\|\nabla F(\wt_{t-1}) - \mt_\S\|_2$ is independent from the random variables $[i]$ ($\S$ and $\wt_{t-1}$ are fixed),
while the last one follows from Asm.~\ref{ass:03}.
Incorporating the above upper bound in~\eqref{eq:003},
we obtain
\begin{align}
	B&
	\leq 
	\left(\tfrac{2}{\gamma} + 8KL \eta^2 \right) \left(F(\wt_{t-1}) - F(\ws)\right) 
	+2\eta^2 \cdot K \cdot \|\mt_\S\|_2^2 
	+2\eta \zeta \cdot \|\mt_{\S^c}\|_2,
	\label{eq:004}
\end{align}
where
$
	\mt_{\S^c}
	\eqdef
	F(\wt_{t-1}) - \mt_\S = \frac{1}{\numsam - s} \cdot \sum_{i \in \S^c} \nabla f_i(\wt_{t-1}).
$
The inequality in~\eqref{eq:004} effectively establishes a recursive bound on $\Ex{[i]}{F(\wt_t) - F(\ws)}$ using only the estimate sequence produced by the epochs.

\textbf{Outer Loop.}
Taking expectation over~$\S_{t}$, assuming that $\eta$ is such that $1 - 4L\eta > 0$,
we have
\begin{small}
\begin{align}
	\Ex{[i], \S_{t}}{F(\wt_t) - F(\ws)} &\leq 	
	\frac{\sfrac{2}{\gamma} + 8KL \eta^2}{2\eta\left(1 - 4 L \eta\right) \cdot K} \Ex{[i], \S_{t}}{F(\wt_{t-1}) - F(\ws)} 
	+ \frac{\eta}{\left(1 - 4 L \eta\right)} \cdot \Ex{[i], \S_{t}}{\|\mt_\S\|_2^2} \nonumber \\
	&\;\;\;\;
	+ \frac{\zeta}{\left(1 - 4 L \eta \right) \cdot K} \cdot \Ex{[i], \S_{t}}{\|\mt_{\S^c}\|_2}.
\label{eq:005}
\end{align}
\end{small}%
To further bound the right-hand side, note that:
\begin{align*}
\Ex{[i], \S_{t}}{\norm{\mt_\S}_2^2} &=
\mathbb{E}_{[i], \S_{t}}
\Bigl[
	\bigl\|
		s^{-1} \cdot \sum_{i \in \S} \gradf_i(\wt_{t-1})
	\bigr\|_{2}^{2}
\Bigr] \\
&
\stackrel{(i)}{\leq} 
	\frac{2}{\numsam \cdot s} 
	\left[
		\sum_{i = 1}^\numsam \norm{\gradf_i(\wt_{t-1}) - \gradf_i(\ws)}_2^2 
		+
		\norm{\gradf_i(\ws)}_2^2 
	\right]\\
&
\stackrel{(ii)}{\leq}
\frac{4L}{s} \cdot \left(F(\wt_{t-1}) - F(\ws) \right) + \frac{2}{\numsam\cdot s} \sum_{i = 1}^\numsam \norm{\gradf_i(\ws)}_2^2,
\end{align*}
where $(i)$ is due to $\norm{\x - \y}_2^2 \leq 2\norm{\x}_2^2 + 2\norm{\y}_2^2$ and $(ii)$ due to eq. (8) in \cite{johnson2013accelerating}.
By Asm.~\ref{ass:02}, $\norm{\gradf_i(\ws)}_2 \leq \xi$. 
Using similar reasoning, under Asm.~\ref{ass:02},
$
\Ex{[i], \S_{t}}{\|\mt_{\S^c}\|_2} \leq \xi.
$
Combining the above with \eqref{eq:005}, we get
\begin{align}
\Ex{[i], \S_{t}}{F(\wt_t) - F(\ws)} \leq 
\frac{\tfrac{2}{\gamma} + 8KL \eta^2\left(1 + \tfrac{1}{s}\right)}{2\eta\left(1 - 4 L \eta\right) \cdot K} \cdot 
\mathbb{E}_{[i], \S_{t}}
\Bigl[
	F(\wt_{t-1}) - F(\ws)
\Bigr] 
+\frac{1}{1 - 4 L \eta} \cdot \left(\frac{2\eta}{s} + \frac{\zeta}{K}\right) \cdot \max\left\{\xi, \xi^2\right\}.
\label{eq:006}
\end{align}
Let $[\S] \eqdef \lbrace{ \S_0, \S_1, \dots, S_{T-1}}\rbrace$.
Also, let
\begin{align*}
	\varphi_t
	\eqdef
	\mathbb{E}_{[i], [\S]}\mathopen{}
	\bigl[
		F(\wt_t ) - F(\ws)
	\bigr],
\end{align*}
and $\rho$ as defined in Thm.~\ref{theorem:convergence}.
Taking expectation with respect to $[\S]$, \eqref{eq:006} becomes
\begin{align*}
	\varphi_t \leq \rho \cdot \varphi_{t-1} + \frac{1}{1 - 4 L \eta} \cdot \left(\frac{2\eta}{s} + \frac{\zeta}{K}\right) \cdot \max\left\{\xi, \xi^2\right\}
\end{align*}
Finally, unfolding the above recursion, we obtain
$
	\varphi_T 
	\leq
	\rho^T \cdot \varphi_{0} 
	+ \kappa
$
where $\kappa$ is defined in Thm.~\ref{theorem:convergence}.
This completes the proof of the theorem.

%
%

\begin{figure*}[t!]
\centering
\includegraphics[width=0.32\textwidth]{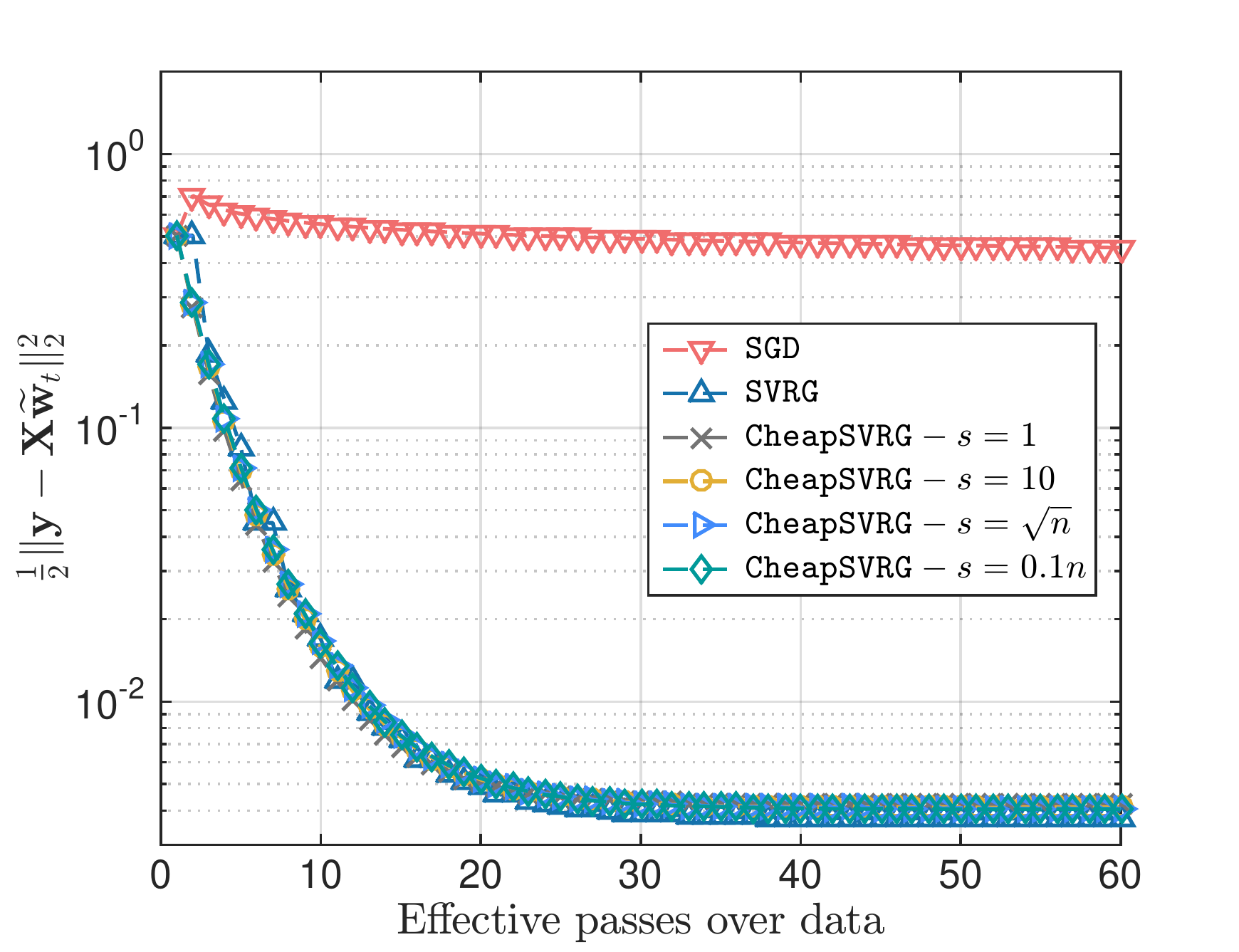} 
\includegraphics[width=0.32\textwidth]{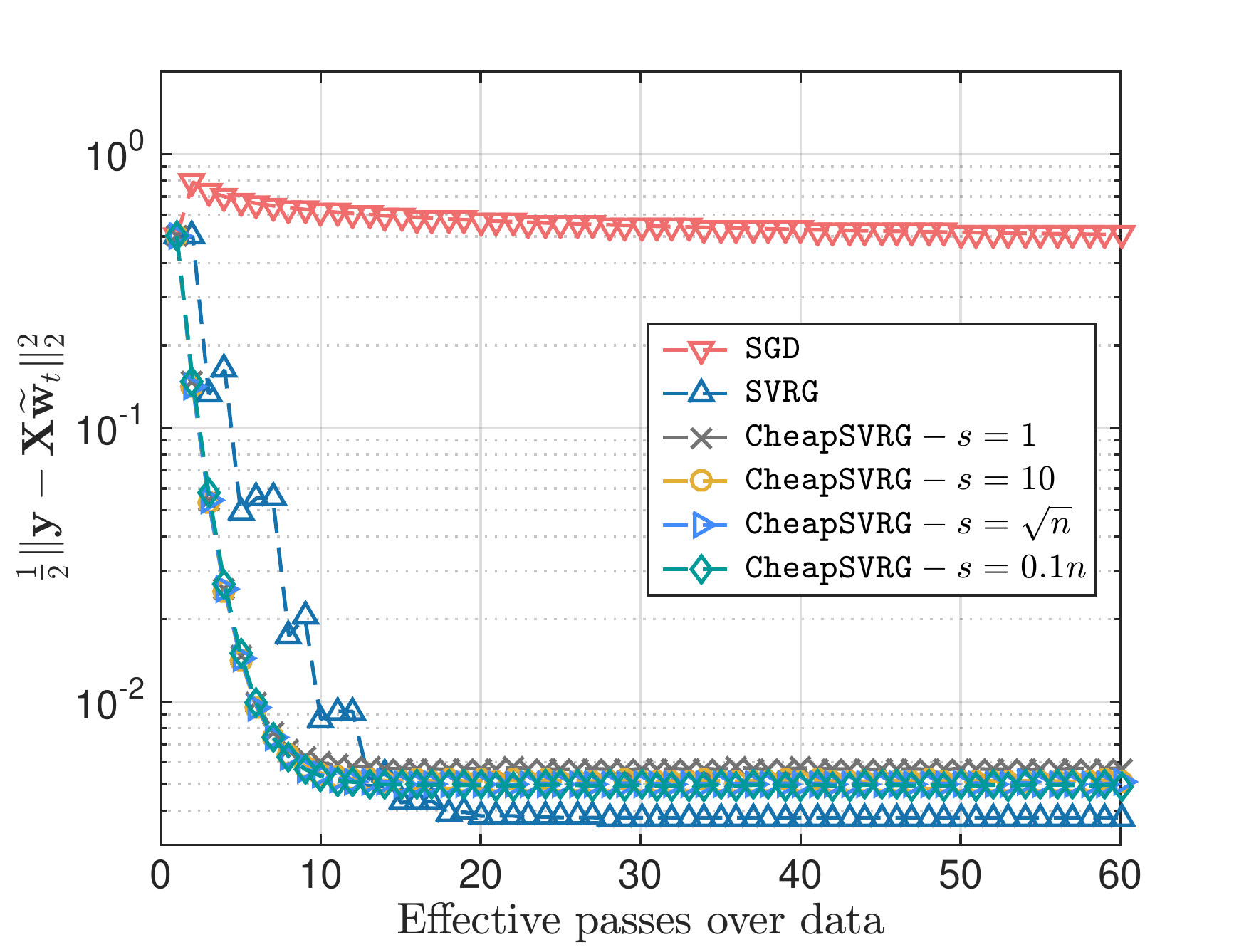}
\includegraphics[width=0.32\textwidth]{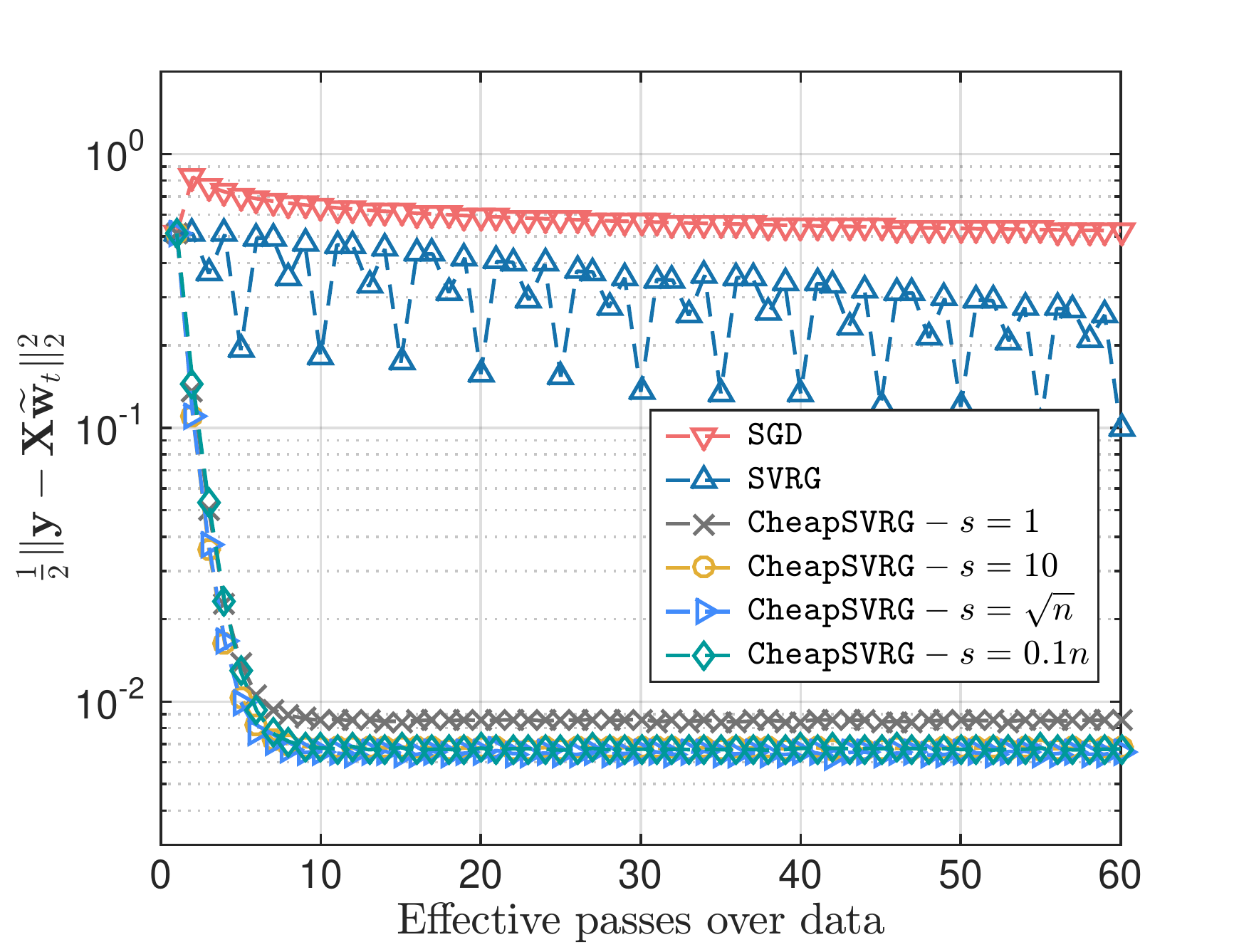}
\caption{Convergence performance w.r.t. $\sfrac{1}{2} \|\mathbf{y} - \mathbf{X}\wt_t\|_2^2$ vs the number of effective data passes -- 
\textit{i.e.}, the number of times $n$ data points were accessed --
for 
${\eta = ({100L})^{-1}}$ (left),
${\eta = ({300L})^{-1}}$ (middle), and
${\eta = ({500L})^{-1}}$ (right).
In all experiments, we generate noise such that $\|\boldsymbol{\varepsilon}\|_{2} = 0.1$.
The plotted curves depict the median over $50$ Monte Carlo iterations:
$10$ random independent instances of~\eqref{exp:eq_00},
$5$ executions/instance for each scheme. }
\label{fig:exp1}
\end{figure*}

\section{Experiments}
\label{sec:experiments}

We empirically evaluate \algo on synthetic and real data
and
compare mainly with \textsc{Svrg}~\cite{johnson2013accelerating}.
We show that in some cases it improves upon existing stochastic optimization methods, and discuss its properties, strengths and weaknesses.

\subsection{Properties of \algo}

We consider a synthetic linear regression problem:
given a set of training samples 
$(\x_1, y_1)$, $\dots$, $(\x_{\numsam}, y_{\numsam})$, 
where $\x_i \in \R^\dim$ and $y_i \in \R$, 
we seek 
the solution to
\begin{align}
	\min_{\w \in \R^\dim} 
	\frac{1}{\numsam} 
	\sum_{i = 1}^\numsam 
		\frac{\numsam}{2}
		\left(y_i - \x_i^\top \w\right)^2. 
		\label{exp:eq_00}
\end{align} 
We generate an instance of the problem as follows.
First, we randomly select a point ${\w^\star \in \mathbb{R}^{p}}$ from a spherical Gaussian distribution and rescale to unit $\ell_{2}$-norm;
this point serves as our `ground truth'.
Then, we randomly generate a sequence of $\x_i$'s i.i.d. according to a Gaussian $\mathcal{N}\left(0, \sfrac{1}{\numsam}\right)$ distribution.
Let $\mathbf{X}$ be the $p \times n$ matrix formed by stacking the samples $\x_{i}$, $i=1,\hdots, n$.
We compute
$\mathbf{y} = \mathbf{X} \w^\star + \boldsymbol{\varepsilon}$, where $\boldsymbol{\varepsilon} \in \R^\numsam$ is a noise term drawn from $\mathcal{N}\left(0, \mathbf{I}\right)$, with $\ell_{2}$-norm rescaled to a desired value controlling the noise level.


We set ${\numsam = 2\cdot 10^3}$ and ${\dim = 500}$.
Let $L = \sigma_{\max}^2(\mathbf{X})$ where $\sigma_{\max}$ denotes the maximum singular value of $\mathbf{X}$.
We run $(i)$ the classic \SGD method with decreasing step size $\eta_k \propto \sfrac{1}{k}$, $(ii)$ the \textsc{Svrg} method of Johnson and Zhang~\cite{johnson2013accelerating} and, $(iii)$ our \algo~for parameter values $s \in \left\{1,\, 10,\, \sqrt{\numsam},\, {0.1\numsam}\right\}$, which covers a wide spectrum of possible configurations for $s$.

\paragraph{Step size selection.}
We study the effect of the step size on the performance of the algorithms; see Figure~\ref{fig:exp1}. 
The horizontal axis represents the number \emph{effective passes} over the data:
evaluating~$\numsam$ component gradients, or computing a single full gradient is considered as \emph{one} effective pass.  
The vertical axis depicts the progress of the objective in~\eqref{exp:eq_00}. 
 
We plot the performance for three step sizes: 
$\eta = (cL)^{-1}$,
for $c=100, 300$ and $500$.
Observe that \textsc{Svrg} becomes slower if the step size is either too big or too small, as also reported in \cite{johnson2013accelerating, xiao2014proximal}.
The middle value $\eta = ({300L})^{-1}$ was the best\footnote{Determined via binary search.} for \textsc{Svrg} 
in the range we considered.
Note that each algorithm achieves its peak performance for a different value of the step size.
In subsequent experiments, however, we will use the above value which was best for \textsc{Svrg}.

Overall, we observed that \algo is more `flexible' in the choice of the step size.
In Figure~\ref{fig:exp1} (right), 
with a suboptimal choice of step size, \textsc{Svrg} oscillates and progresses slowly.
On the contrary, \algo converges nice even for ${s=1}$. 
It is also worth noting \algo with $s=1$,
\textit{i.e.}, effectively combining two datapoints in each stochastic update, achieves a substantial improvement compared to vanilla \SGD.

\begin{figure*}[t!]
\centering
\includegraphics[width=0.32\textwidth]{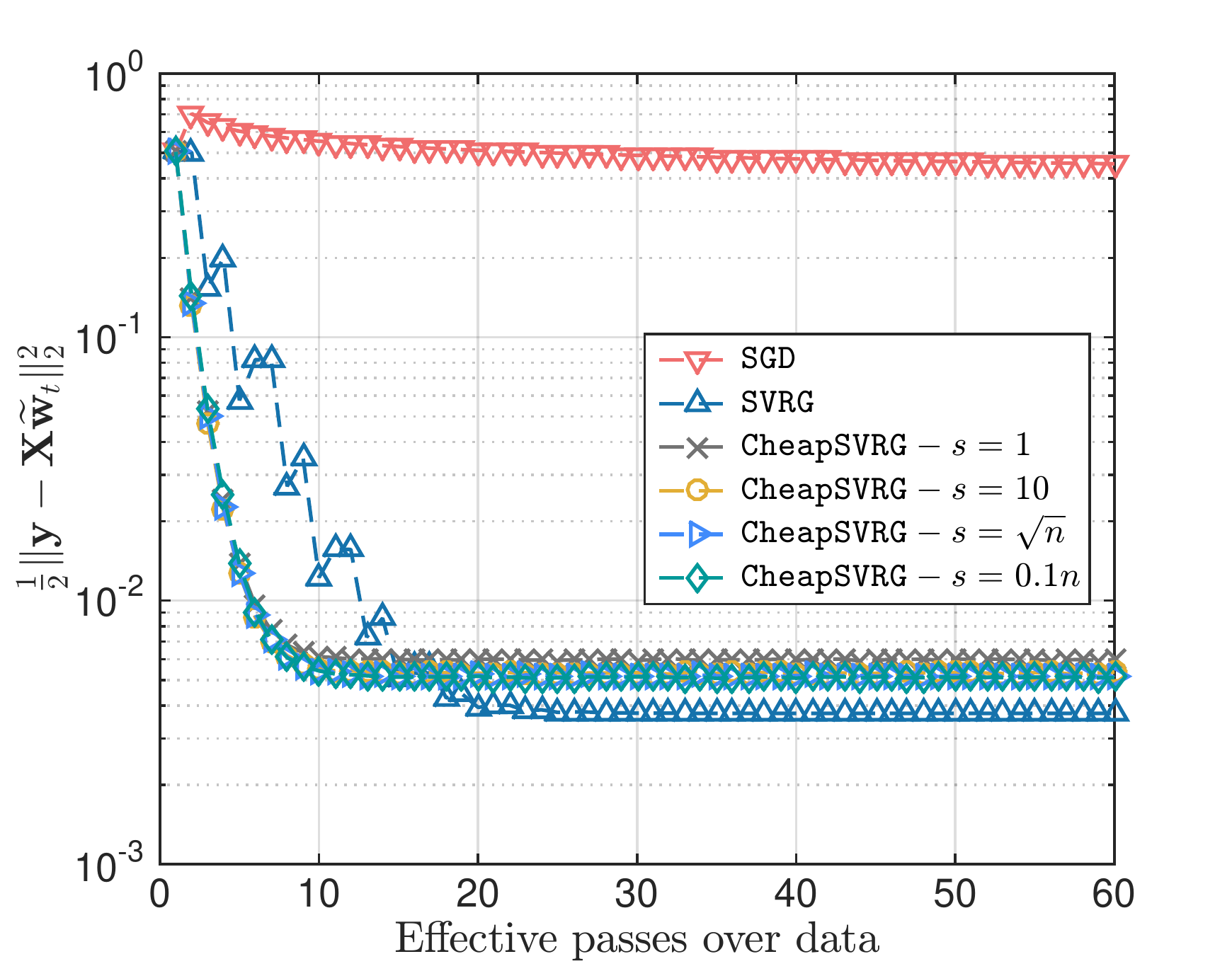} 
\includegraphics[width=0.32\textwidth]{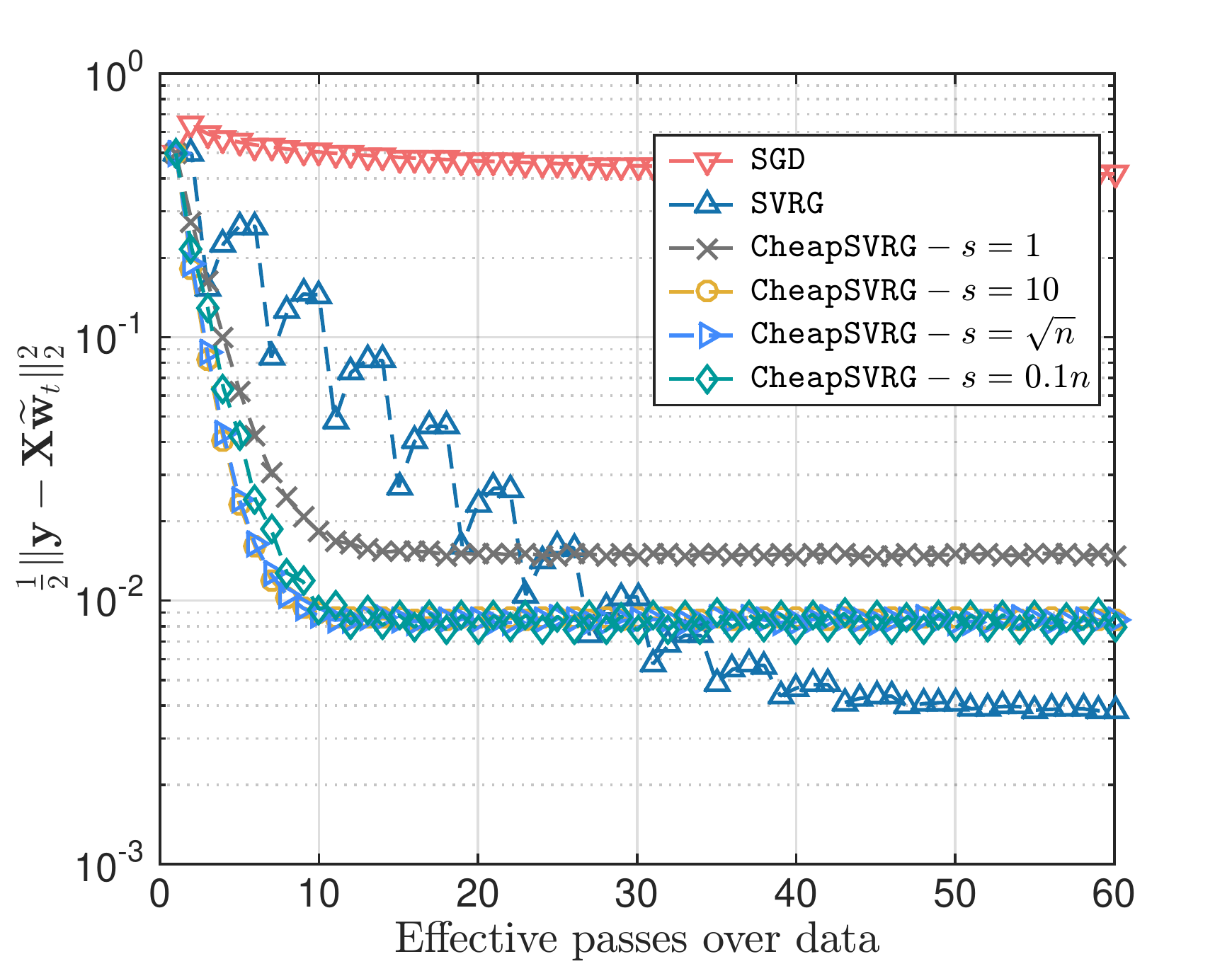}
\includegraphics[width=0.32\textwidth]{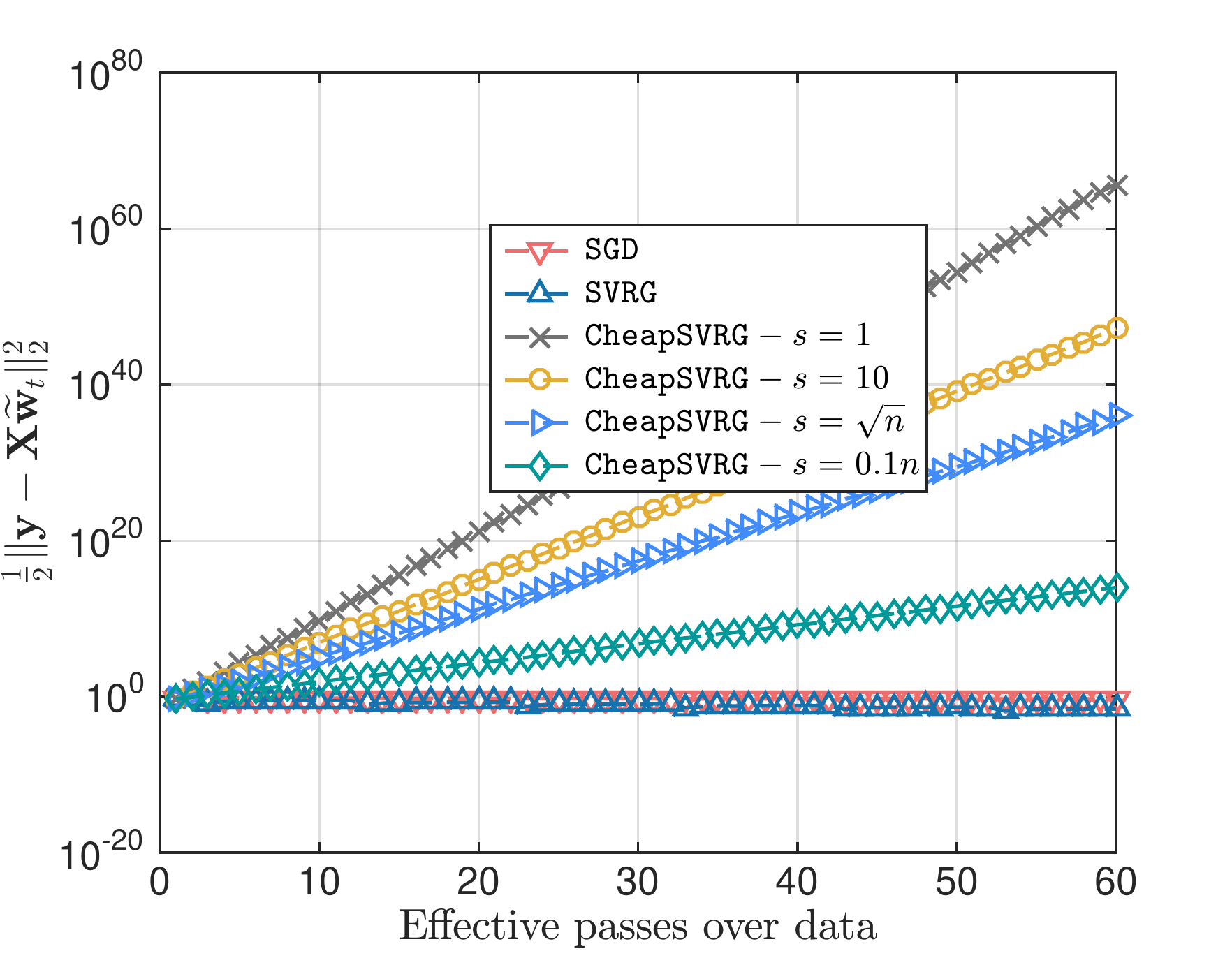}
\caption{Convergence performance w.r.t. $\sfrac{1}{2} \|\mathbf{y} - \mathbf{X}\wt_t\|_2^2$ vs. effective number of passes over the data. We set an upper bound on total atomic gradient calculations spent as $\nabla_{\text{total}} = 60\numsam = 12\cdot 10^4$ and vary the percentage of these resources in the inner loop two-stage \SGD schemes. Left: \texttt{perc} $= 60$\%. Middle : \texttt{perc} $= 75$\%. Right: \texttt{perc} $= 90$\%. In all experiments, we set $\|\boldsymbol{\varepsilon}\|_2 = 0.1$.
The plotted curves depict the median over $50$ Monte Carlo iterations:
$10$ random independent instances of~\eqref{exp:eq_00},
$5$ executions/instance for each scheme.}
\label{fig:exp3}
\end{figure*}

\paragraph{Resilience to noise.}
We study the behavior of the algorithms with respect to the noise magnitude.
We consider the cases
 $\|\boldsymbol{\varepsilon}\|_{2} \in \left\{0,  0.5, 10^{-2}, 10^{-1}\right\}$.
In Figure \ref{fig:exp2},
we focus on four distinct noise levels and plot the distance of the estimate from the ground truth $\w^{\star}$ vs the number of effective data passes.  
For \SGD, we use the sequence of step sizes $\eta_k = {{0.1} \cdot {L^{-1} \cdot k^{-1}}}$.

We also note the following surprising result: in the noiseless case, it appears that $s = 1$ is sufficient for linear convergence in practice; see Figure \ref{fig:exp2}.
In contrast, \algo~is less resilient to noise than \textsc{Svrg} -- however, we can still get to a good solution with less computational complexity per iteration.

\begin{figure*}[t!]
\centering
\includegraphics[width=0.24\textwidth]{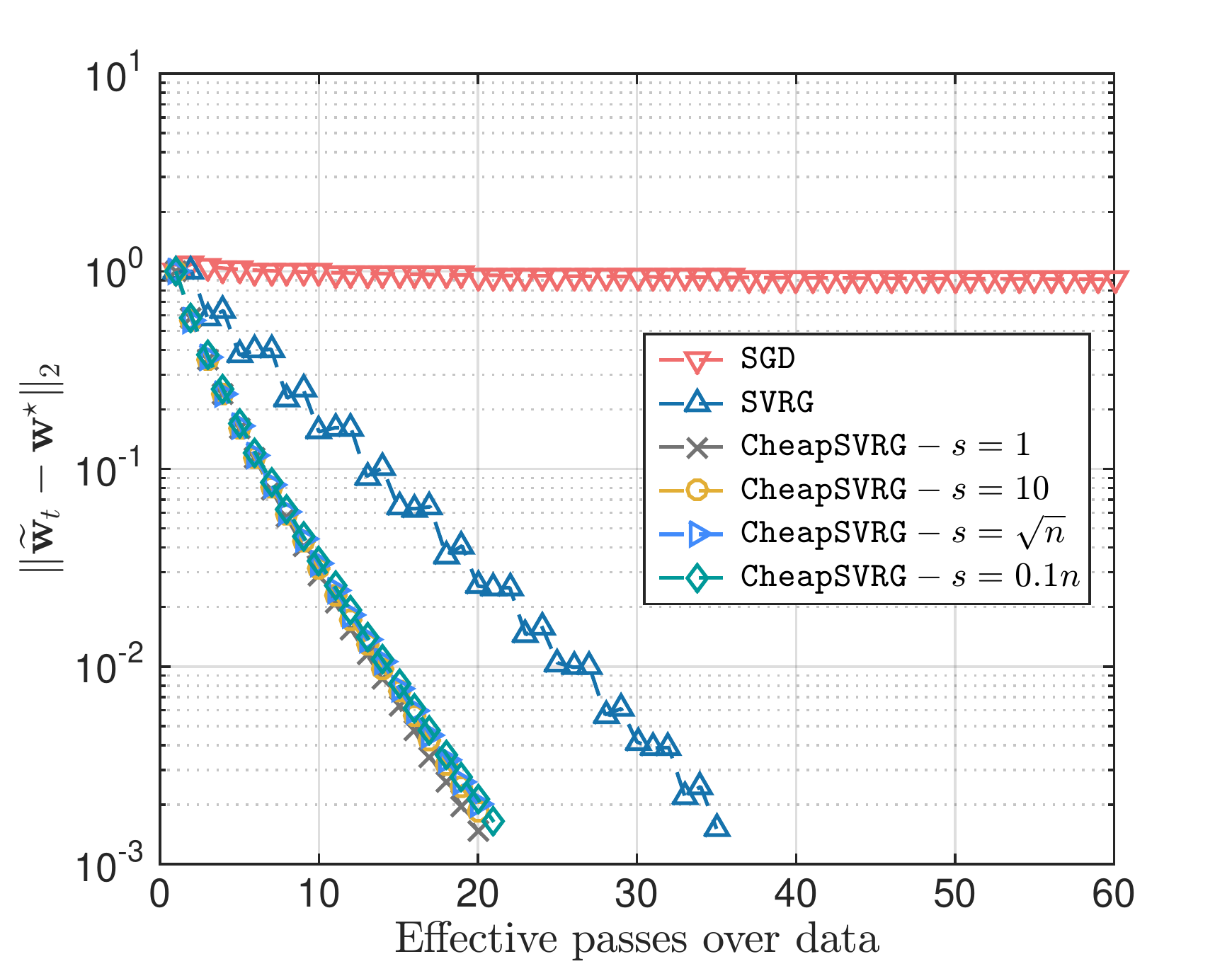} 
\includegraphics[width=0.24\textwidth]{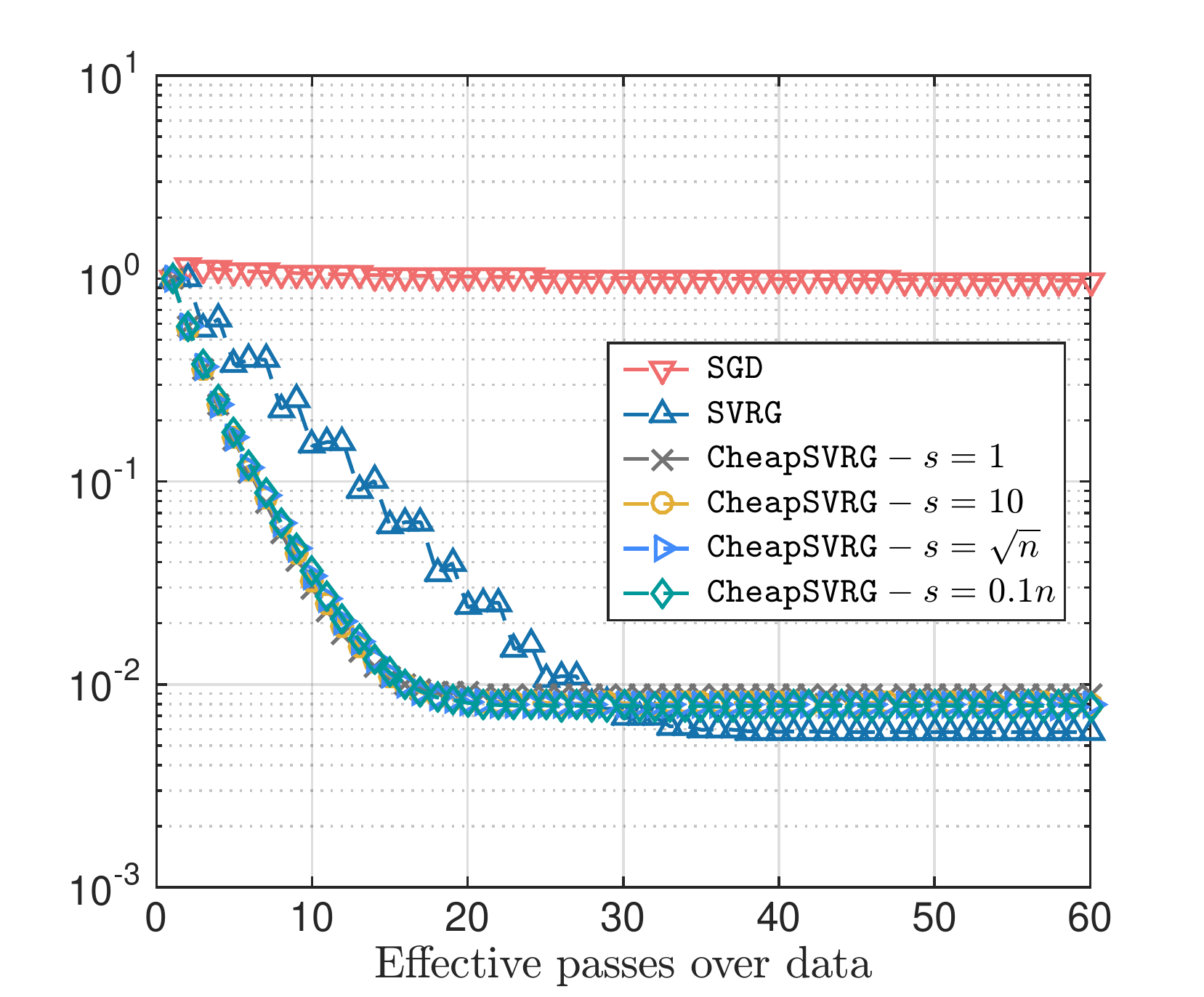} 
\includegraphics[width=0.24\textwidth]{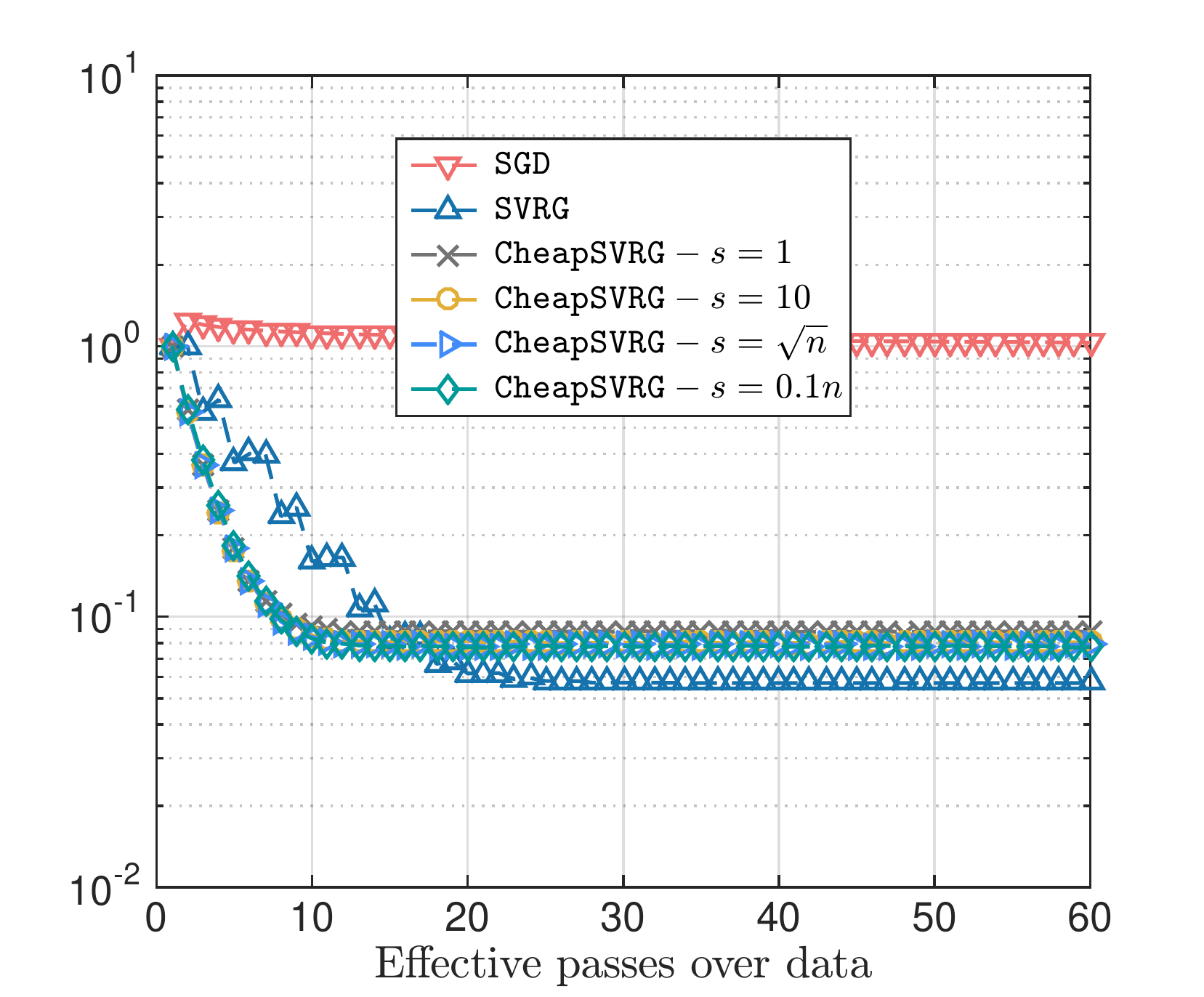} 
\includegraphics[width=0.24\textwidth]{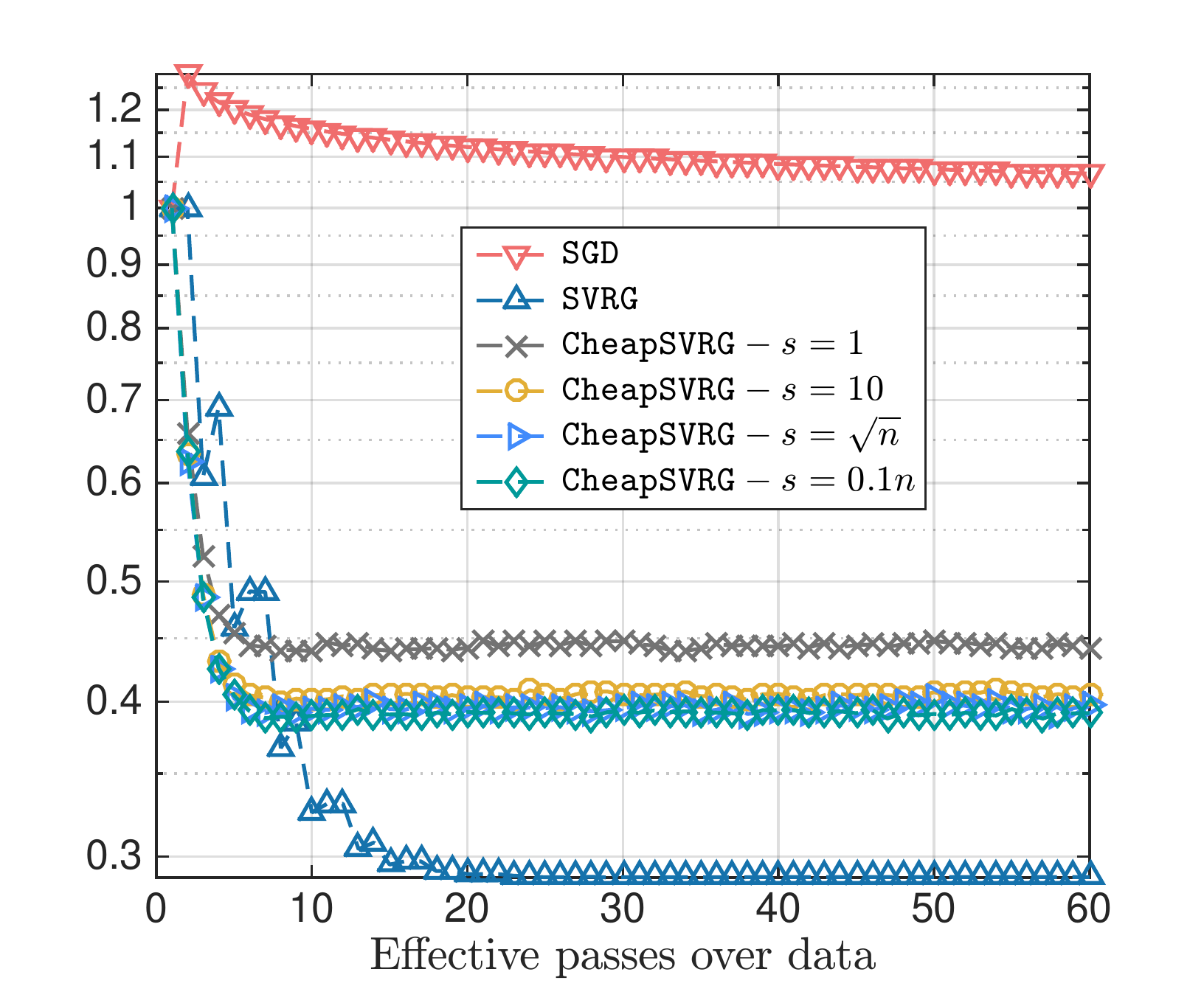}
\caption{Distance from the optimum vs the number of effective data passes for the linear regression problem.
We generate 10 independent random instances of~\eqref{exp:eq_00}. From left to right, we use noise
noise $\boldsymbol{\varepsilon}$ with standard deviation
$\|\boldsymbol{\varepsilon}\|_2 = 0$ (noiseless), 
$\|\boldsymbol{\varepsilon}\|_2 = 10^{-2}$, 
$\|\boldsymbol{\varepsilon}\|_2 = 10^{-1}$,
and $\|\boldsymbol{\varepsilon}\|_2 = 0.5$.
Each scheme is executed $5$ times/instance.
We plot the median over the $50$ Monte Carlo iterations.
}
\label{fig:exp2}
\end{figure*}

\paragraph{Number of inner loop iterations.}
Let $\nabla_{\text{total}}$ denote a budget of atomic gradient computations.
We study how the objective value decreases with respect to percentage \texttt{perc} of the budget allocated to the inner loop.
We first run a classic gradient descent with step size $\eta = \sfrac{1}{L}$ 
which converges within $\sim 60$ iterations. 
Based on this, we choose our global budget to be $\nabla_{\text{total}} = 60\numsam = 12\cdot 10^4$. 
We consider the following values for \texttt{perc}: $60\%, 75\%, 90\%$. \textit{E.g.}, when \texttt{perc} $= 90$\%, only $12000$ atomic gradient calculations are spent in outer loop iterations.
The results are depicted in Fig.~\ref{fig:exp3}.

We observe that convergence is slower as fewer computations are spent in outer iterations.
Also, in contrast to \textsc{Svrg},
our algorithm appears to be sensitive to the choice of \texttt{perc}:
for ${\texttt{perc} = 90\%}$, our scheme diverges, while \textsc{Svrg} finds relatively good solution. 

\subsection{$\ell_2$-regularized logistic regression}
We consider the regularized logistic regression problem,
\textit{i.e.}, the minimization
\begin{align}
	\min_{\w \in \R^\dim} \frac{1}{\numsam} \sum_{i = 1}^\numsam \log\left(1 + e^{-y_i \cdot \x_i^\top \w}\right) + \lambda \cdot \|\w\|_2^2.
\end{align} 
Here, $\left(y_i, \x_i\right) \in \left\{-1, 1\right\} \times \R^\dim$, where $\y_i$ indicates the binary label in a classification problem, $\w$ represents the predictor, and $\lambda > 0$ is a regularization parameter.

We focus on the training loss in such a task.\footnote{By \cite{johnson2013accelerating}, we already know that such two-stage \SGD~schemes perform better than vanilla \SGD.}
We use the real datasets listed in Table~\ref{table:datasets:info}. 
We pre-process the data so that $\|\x_i\|_2 = 1, \forall i$, as in~\cite{xiao2014proximal}. This leads to an upper bound on Lipschitz constants for each $f_i$ such that $L_i \leq L := \sfrac{1}{4}$.
We set ${\eta = {0.1}/{L}}$ for all algorithms under consideration, according to \cite{johnson2013accelerating, xiao2014proximal}, \texttt{perc} $= 75$\% and, $\lambda = 10^{-6}$ for all problem cases.

Fig.~\ref{fig:exp4} depicts the convergence results for the   \texttt{marti0}, \texttt{reged0} and \texttt{sido0} datasets. 
\algo achieves comparable performance to \textsc{Svrg}, while requiring less computational `effort' per epoch: though smaller values of $s$, such that $s = 1$ or $s = 10$, lead to slower convergence, \algo still performs steps towards the solution, while the complexity per epoch is significantly diminished.

\begin{figure*}[t!]
\centering
\includegraphics[width=0.32\textwidth]{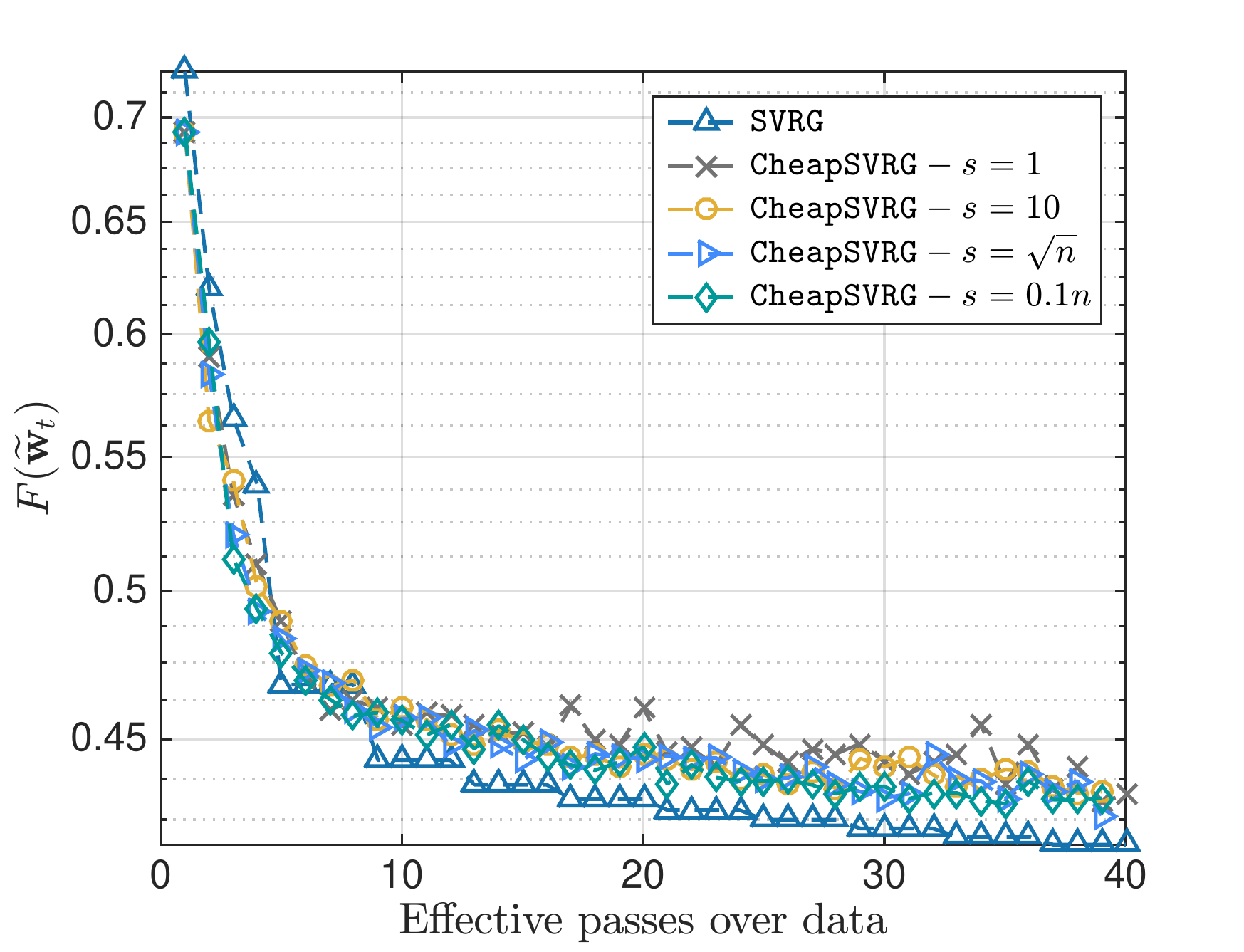} 
\includegraphics[width=0.32\textwidth]{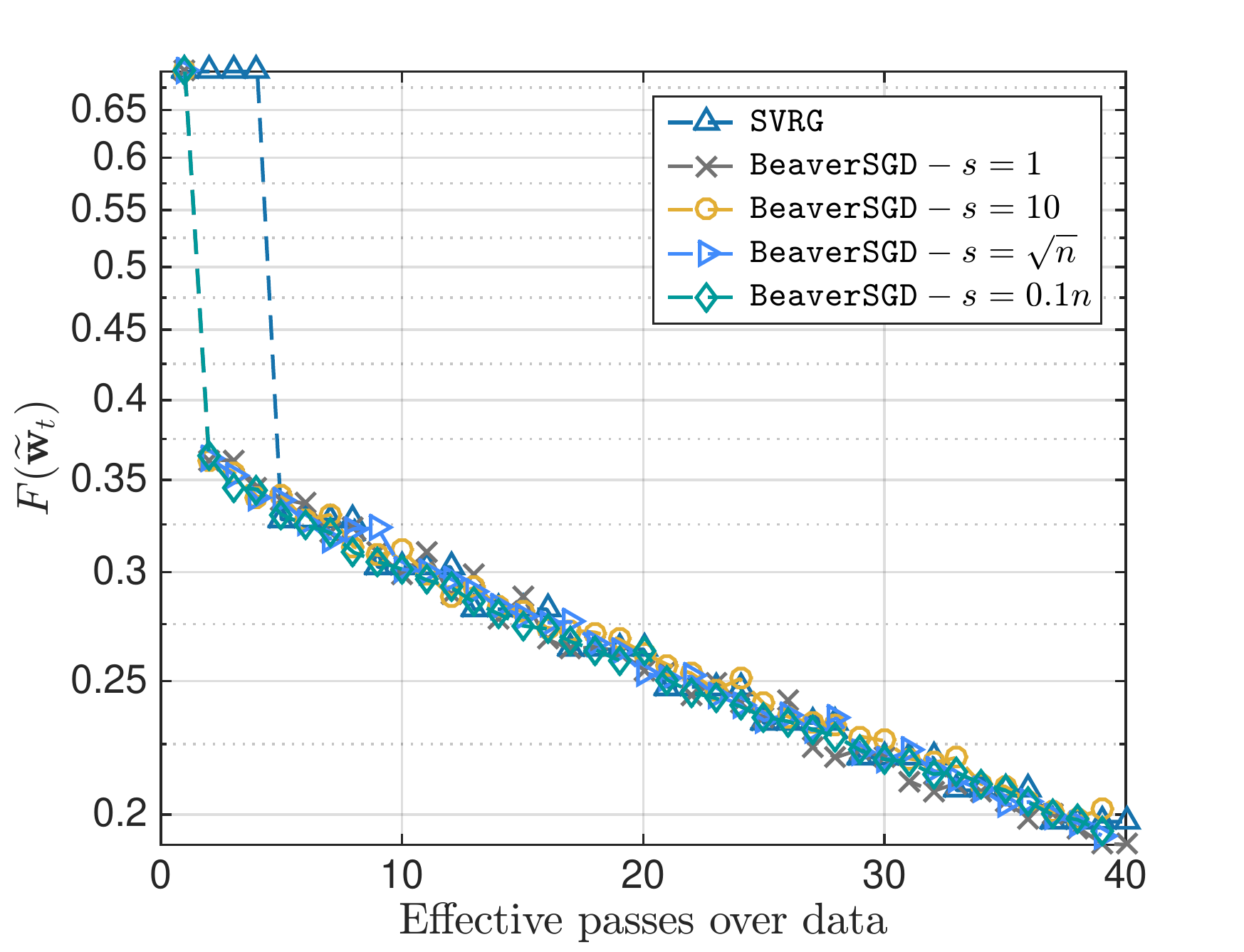}
\includegraphics[width=0.32\textwidth]{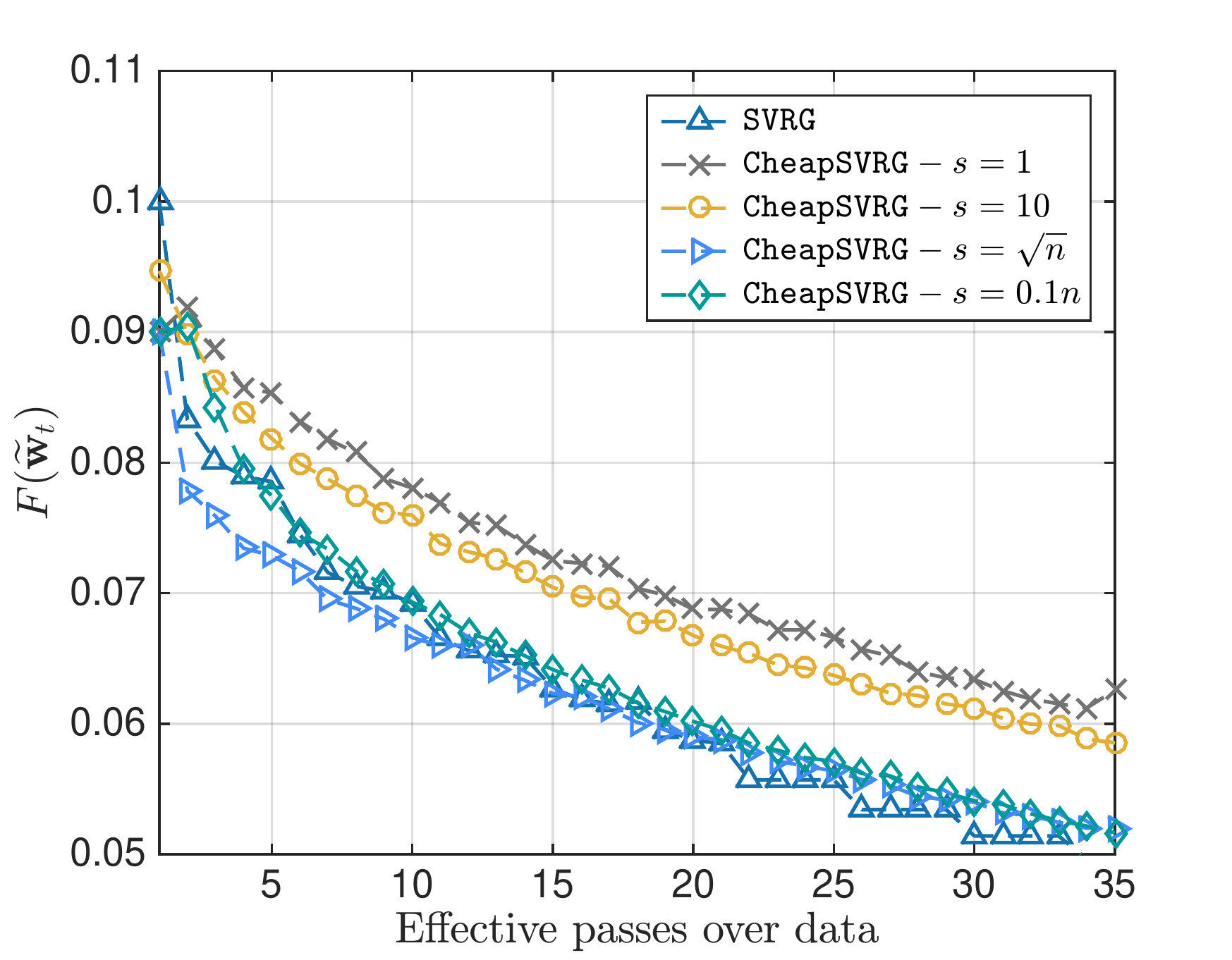}
\caption{Convergence performance of algorithms for the $\ell_2$-regularized logistic regression objective. From left to right, we used the \texttt{marti0}, \texttt{reged0}, and \texttt{sido0} dataset; the description of the datasets is given in Table \ref{table:datasets:info}. Plots depict $F(\wt_t)$ vs the number of effective data passes.
We use step size $\eta = 0.1/L$ for all algorithms, as suggested by \cite{johnson2013accelerating, xiao2014proximal}. 
The curves depict the median over $10$ Monte Carlo iterations.}
\label{fig:exp6}
\end{figure*}

\begin{table}[t!]
\centering
\footnotesize
\rowcolors{2}{white}{black!05!white}
\begin{tabular}{c c c c c c c}
  \toprule
  Dataset & & $\numsam$ & & $\dim$ \\ 
  \cmidrule{1-1} \cmidrule{3-3} \cmidrule{5-5}
   \texttt{marti0} & & $1024$ & & $500$ \\
   \texttt{reged0} & & $999$ & & $500$ \\
   \texttt{sido0} & & $12678$ & & $4932$ \\
  \bottomrule
\end{tabular}
\caption{Summary of datasets \cite{guyon2008dataset}.}
\label{table:datasets:info}
\end{table}

\section{Conclusions}

We proposed \algo, a new variance-reduction scheme for stochastic optimization, based on~\cite{johnson2013accelerating}.
The main difference is that 
instead of computing a full gradient in each epoch,
our scheme computes a surrogate utilizing only part of the data, thus, reducing the per-epoch complexity.
\algo comes with convergence guarantees:
under assumptions, it achieves a linear convergence rate up to some constant neighborhood of the optimal.
We empirically evaluated our method and discussed its strengths and weaknesses.

There are several future directions.
In the theory front, it would be interesting to maintain similar convergence guarantees under fewer assumptions,
extend our results beyond the smooth convex optimization, \textit{e.g.}, to the proximal setting, 
or develop distributed variants.
Finally, we seek to apply our \algo to large-scale problems, \textit{e.g.}, for training large neural networks.
We hope that this will help us better understand the properties of \algo and the trade-offs associated with its various configuration parameters.

%

\FloatBarrier
\begin{small}
	\bibliography{CheapSVRG}
	\bibliographystyle{plain}
\end{small}


\newpage
\clearpage
\appendix
%
%
%

\section{Proof of Theorem \ref{theorem:complexity}}
By assumptions of the theorem, we have:
\begin{small}
\begin{align*}
\eta < \dfrac{1}{4L\left((1 + \theta)+ \sfrac{1}{s}\right)} ~\text{and}~ K > \frac{1}{(1 - \theta) \eta \left(1 - 4 L \eta\right) \gamma},
\end{align*}
\end{small} As mentioned in the remarks of Section \ref{sec:analysis}, the above conditions are sufficient to guarantee $\rho < 1$, for some $\theta \in (0,1)$. Further, for given accuracy parameter $\epsilon$, we assume $\kappa \leq \frac{\epsilon}{2}$. 

Let us define 
\begin{align*}
	\varphi_t :=
	\mathbb{E}\mathopen{}
	\bigl[
		F(\wt_t ) - F(\ws)
	\bigr],
\end{align*} as in the proof of Theorem \ref{theorem:convergence}. In order to satisfy $\varphi_T \leq \epsilon$, it is sufficient to find the required number of iterations such that $\rho^T \varphi_0 \leq \frac{\epsilon}{2}$. In particular:

\begin{small}\begin{align*}
\rho^T \varphi_{0} \leq \frac{\epsilon}{2} &\Rightarrow -\left(T \log{\rho} + \log \varphi_0\right) \geq - \log \frac{\epsilon}{2} \\
														 &\Rightarrow T \cdot \log \left(\rho^{-1}\right) \geq \log \frac{2}{\epsilon} + \log \varphi_0 \\
														 &\Rightarrow T \geq \left(\log \frac{1}{\rho}\right)^{-1} \cdot \log \left(\frac{2\left(F(\wt_0) - F(\ws)\right)}{\epsilon}\right)
\end{align*}\end{small}
Moreover, each epoch involves $K$ iterations in the inner loop. Each inner loop iteration involves two atomic gradient calculations. Combining the above, we conclude that the total number of gradient computations required to ensure that  $\varphi_T \leq \epsilon$ is $O\left((2K + s)\log{\sfrac{1}{\epsilon}}\right)$.

\section{Mini-batches in \textsc{CheapSVRG}}{\label{sec:mini_batch}}

In the sequel, we show how Alg.~\ref{algo:beaverSGD} can also accommodate mini-batches in the inner loops and maintain similar convergence guarantees, under Assumptions~\ref{ass:00}-\ref{ass:03}. The resulting algorithm is described in Alg. \ref{algo:MBbeaverSGD}. In particular:

\begin{algorithm}[!ht]
	\caption{\algo with mini batches}
	\label{algo:MBbeaverSGD}
	{
	\begin{algorithmic}[1]
	   	\STATE \textbf{Input}: $\widetilde{\w}_0, \eta, s, q, K, T$.
	   	\STATE \textbf{Output}: $\wt_T$.
	   	\FOR{ $t = 1, 2, \dots, T$ }
	   		\STATE Randomly select $\S_{t} \subset [\numsam]$ with cardinality $s$.
	   		\STATE Set $\widetilde{\w} = \widetilde{\w}_{t-1}$ and $\S = \S_{t}$.
				\STATE $\widetilde{\boldsymbol{\mu}}_\S = \sfrac{1}{s} \sum_{i \in \S} \gradf_i(\wt)$.
				\STATE $\w_0 = \widetilde{\w}$.
				\FOR{ $k = 1, \dots, K-1$ }
				    \STATE Randomly select $\Q_k \subset [\numsam]$ with cardinality $q$.
					\STATE Set $\Q = \Q_k$.
					\STATE $\v_k = \nabla f_{\Q}(\w_{k-1}) - \nabla f_{\Q} (\widetilde{\w}) + \widetilde{\boldsymbol{\mu}}_\S$.
					\STATE $\w_{k} = \w_{k-1} - \eta \cdot \v_k$.
				\ENDFOR
				\STATE $\widetilde{\w}_t = \frac{1}{K} \sum_{j = 0}^{K-1} \w_{j}$.
			\ENDFOR
	\end{algorithmic}
	}
\end{algorithm}

\begin{theorem}[Iteration invariant]
\label{with_q:theorem:convergence}
Let $\ws$ be the optimal solution for minimization~\eqref{intro:eq_00}.
Further, let $s$, $q$, $\eta$, $T$ and $K$ be user defined parameters such that
\begin{align*}
	\rho \eqdef 
	\frac{q}{\eta \cdot \left(q - {4 L \cdot \eta}\right) \cdot {K \cdot \gamma}} + \frac{4L \cdot \eta \cdot\left(s + q\right)}{\left(q - 4 L \cdot \eta\right) \cdot s} 
	< 1.
\end{align*}
Under Asm.~\ref{ass:00}-\ref{ass:03}, \algo~satisfies the following:
\begin{align*}
	&
	\mathbb{E}\bigl[ F(\wt_T) - F(\ws) \bigr]
	\leq \rho^{T} \cdot \left(F(\wt_0)-F(\ws)\right) +\frac{q}{q - 4 L \eta} \cdot \left(\frac{2\eta}{s} + \frac{\zeta}{K}\right) \cdot \max\left\{\xi, \xi^2\right\} \cdot \frac{1}{1 - \rho}.
\end{align*}
\end{theorem}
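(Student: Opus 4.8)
The plan is to replay the proof of Theorem~\ref{theorem:convergence} almost verbatim, with the single random index $i_k$ replaced throughout by the mini-batch $\Q_k$ of cardinality $q$, and to isolate the single place where the batch size actually enters. Writing $\nabla f_\Q(\cdot) \eqdef \tfrac{1}{q}\sum_{i\in\Q}\gradf_i(\cdot)$, the update direction $\v_k = \nabla f_\Q(\w_{k-1}) - \nabla f_\Q(\wt) + \mt_\S$ still satisfies $\Ex{\Q_k}{\v_k} = \nabla F(\w_{k-1}) - \nabla F(\wt) + \mt_\S$, since averaging over a uniformly chosen index set preserves the mean. Hence the expansion~\eqref{eq:001} and its first-order (inner-product) term are identical to the single-sample case, and only the second-moment term $\Ex{\Q_k}{\norm{\v_k}_2^2}$ is affected by $q$.

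First I would establish the mini-batch analog of~\eqref{bound_on_expected_norm_of_v_k}. Averaging $q$ indices reduces the variance of $\nabla f_\Q(\w_{k-1}) - \nabla f_\Q(\wt)$ around its mean by a factor $\tfrac{1}{q}$ (up to a finite-population correction). Combining this with $\norm{\x-\y}_2^2 \le 2\norm{\x}_2^2 + 2\norm{\y}_2^2$ and eq.~(8) of~\cite{johnson2013accelerating}, the target is
\begin{align*}
\Ex{\Q_k}{\norm{\v_k}_2^2}
\;\le\;
\frac{8L}{q}\bigl(F(\w_{k-1}) - F(\ws) + F(\wt) - F(\ws)\bigr) + 2\,\norm{\mt_\S}_2^2,
\end{align*}
which collapses to~\eqref{bound_on_expected_norm_of_v_k} exactly when $q=1$.

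From here everything propagates mechanically. Carrying the extra $\tfrac{1}{q}$ through the summation over $k$ turns the leading coefficient $2\eta(1-4L\eta)$ of the original analysis into $\tfrac{2\eta}{q}(q-4L\eta)$, while the convexity, strong-convexity, Cauchy--Schwarz, and Assumption~\ref{ass:03} steps go through unchanged up to~\eqref{eq:004}. Taking expectation over $\S_t$ as in~\eqref{eq:005}--\eqref{eq:006}, the bound $\Ex{}{\norm{\mt_\S}_2^2}\le \tfrac{4L}{s}(F(\wt_{t-1})-F(\ws)) + \cdots$ supplies the $\tfrac{1}{s}$ contribution and the batched part supplies the $\tfrac{1}{q}$ contribution; together they assemble into the coefficient $\tfrac{4L\eta(s+q)}{(q-4L\eta)s}$ and the stated additive constant. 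Dividing by the new leading coefficient yields precisely the $\rho$ in the statement, and unfolding the epoch recursion $\varphi_t \le \rho\,\varphi_{t-1} + (\cdot)$ gives the claimed geometric bound; setting $q=1$ recovers Theorem~\ref{theorem:convergence}.

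The main obstacle is the first step: rigorously pinning down the $\tfrac{1}{q}$ variance-reduction factor. One must fix whether $\Q_k$ is drawn with or without replacement, track the cross terms $\Ex{}{\gradf_i(\cdot)^\top \gradf_j(\cdot)}$ for $i\ne j$ (which carry the finite-population correction $\tfrac{\numsam-q}{\numsam-1}$), and bound them so that the clean $\tfrac{8L}{q}$ coefficient above survives rather than degrading to a weaker $O(1)$ term. Everything downstream is a routine re-run of the single-sample argument, so correctness of the final $\rho$ hinges entirely on getting this mini-batch second-moment estimate right.
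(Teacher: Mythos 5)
Your high-level plan coincides with the paper's: the proof of Theorem~\ref{with_q:theorem:convergence} really is a verbatim replay of the proof of Theorem~\ref{theorem:convergence} with $i_k$ replaced by $\Q_k$, the first-moment identity $\Ex{\Q_k}{\v_k}=\nabla F(\w_{k-1})-\nabla F(\wt)+\mt_\S$ is unchanged, and the only new ingredient is exactly the second-moment bound you write down, $\Ex{\Q_k}{\norm{\v_k}_2^2}\le \tfrac{8L}{q}\left(F(\w_{k-1})-F(\ws)+F(\wt)-F(\ws)\right)+2\norm{\mt_\S}_2^2$, after which the $\sfrac{1}{q}$ propagates mechanically into the stated $\rho$. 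The gap is that you defer precisely this one new step, and the route you sketch for it would not deliver the stated constant. The paper does \emph{not} use a variance/finite-population-correction computation: it proves a standalone lemma by enumerating all ${n \choose q}$ equiprobable subsets $\Q^j$, bounding $\bigl\|\sum_{i\in\Q^j}(\gradf_i(\w_{k-1})-\gradf_i(\ws))\bigr\|_2^2$ by the sum of the individual squared norms, and using the counting identity ${n \choose q}^{-1}{n-1 \choose q-1}=\sfrac{q}{n}$ together with eq.~(8) of~\cite{johnson2013accelerating} to conclude $\Ex{\Q_k}{\norm{\nabla f_{\Q_k}(\w_{k-1})-\nabla f_{\Q_k}(\ws)}_2^2}\le \sfrac{2L}{q}\cdot(F(\w_{k-1})-F(\ws))$.

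The reason your proposed route fails to reproduce this is that the variance decomposition gives $\Ex{\Q_k}{\norm{\nabla f_{\Q_k}(\w_{k-1})-\nabla f_{\Q_k}(\ws)}_2^2}=\norm{\nabla F(\w_{k-1})-\nabla F(\ws)}_2^2+\tfrac{1}{q}\cdot\tfrac{\numsam-q}{\numsam-1}\cdot\Var$, where only the variance term carries the $\sfrac{1}{q}$; the squared-mean term $\norm{\nabla F(\w_{k-1})-\nabla F(\ws)}_2^2$ is independent of $q$ and can only be bounded by $2L\left(F(\w_{k-1})-F(\ws)\right)$ via smoothness. You would therefore end up with a coefficient of order $L\left(1+\sfrac{1}{q}\right)$ rather than $\sfrac{L}{q}$, which changes the leading factor $2\eta\left(1-\sfrac{4L\eta}{q}\right)$ and hence the $\rho$ in the statement. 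So the ``main obstacle'' you flag is genuine: to match the theorem as stated you must prove the lemma the paper proves, and the honest cross-term bookkeeping you propose does not get you there.
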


\subsection{Proof of Theorem \ref{with_q:theorem:convergence}}
To prove~\ref{with_q:theorem:convergence},
we analyze \algo starting from its core inner loop (Lines $9$-$14$).
We consider a fixed subset $\S \subseteq [n]$ and show that in expectation, the steps of the inner loop make progress towards the optimum point.
Then, we move outwords to the `wrapping' loop that defines consecutive epochs
to incorporate the randomness in selecting the set $\S$.

We consider the $k$th iteration of the inner loop, during the $t$th iteration of the outer loop;
we consider a fixed set $\S \subseteq [n]$, starting point $\w_0 \in \mathbb{R}^{p}$ and (partial) gradient information $\mt_\S \in \mathbb{R}^{p}$  as defined in Steps $6-8$ of Alg.~\ref{algo:beaverSGD}.
The set~$\Q_k$ is randomly selected from $[\numsam]$ with cardinality $|\Q_k| = q$.
Similarly to the proof of Thm.~\ref{theorem:convergence}, we have:
\begin{align}
	&\!\!
	\mathbb{E}_{\Q_k}\mathopen{}
	\left[
	\left\|
		\w_k -\ws
	\right\|_{2}^{2}
	\right]
	= 
	\left\| \w_{k-1}-\ws \right\|_{2}^{2} - 2\eta \cdot \left(\w_{k-1} - \ws \right)^\top 
	\mathbb{E}_{\Q_k}\left[\v_k\right] + \eta^2 
	\mathbb{E}_{\Q_k}\mathopen{}\left[\norm{\v_k}_2^2 \right].
 \label{with_q:eq:001}
\end{align} 
where the expectation is with respect to the random variable $\Q_k$.
By the definition of $\v_k$ in Line 12,
\begin{align}
	\mathbb{E}_{\Q_k}\mathopen{}
	\left[
		\v_{k}
	\right]
	&=
	\mathbb{E}_{\Q_k}\mathopen{}
	\left[
		\nabla f_{\Q_k}(\w_{k-1}) - \nabla f_{\Q_k}(\wt) + \mt_\S
	\right] \nonumber\\
	&=
	\nabla F(\w_{k-1}) - \nabla F(\wt) + \mt_\S,
\end{align}
where the second step follows from the fact that $\Q_{k}$ is selected uniformly at random from $[n]$.
Similarly,
\begin{align} 
	\mathbb{E}_{\Q_k}\mathopen{}
	\left[
		\left\| \v_{k} \right\|_{2}^{2}
	\right]
	&=
	\mathbb{E}_{\Q_k}\mathopen{}
	\left[
		\left\|
			\nabla f_{\Q_k}(\w_{k-1}) - \nabla f_{\Q_k}(\wt) + \mt_\S
		\right\|_{2}^{2}
	\right]
	\nonumber \\ 
	&\;\;\stackrel{(i)}{\leq} 
	4\cdot 
	\mathbb{E}_{\Q_k}\mathopen{}
	\left[
		\|\nabla f_{\Q_k}(\w_{k-1}) - \nabla f_{\Q_k}(\ws)\|_2^2
	\right] +
		4\cdot
		\mathbb{E}_{\Q_k}\mathopen{}
		\left[
			\left\|\nabla f_{\Q_k}(\wt) - \nabla f_{\Q_k}(\ws)\right\|_{2}^{2}
		\right]
		+2 \cdot \|\mt_\S\|_2^2 
		\nonumber\\
	&\;\;\stackrel{(ii)}{\leq}
	\sfrac{8L}{q} \cdot
	\left(
		F(\w_{k-1}) - F(\ws) + F(\wt) - F(\ws)
	\right) + 2\cdot \|\mt_\S\|_{2}^{2}.
	\label{with_q:bound_on_expected_norm_of_v_k}
\end{align}
Inequality $(i)$ is follows by applying $\norm{\x - \y}_2^2 \leq 2\norm{\x}_2^2 + 2\norm{\y}_2^2$ on all atomic gradients indexed by $\Q_k$,
while $(ii)$ is due to the following lemma.

\begin{lemma}
Given putative solution $\w_{k-1}$ and mini-batch $\Q_k$ with cardinality $|\Q_k| = q$, the following holds true on expectation:
\begin{align*}
\Ex{\Q_k}{\|\nabla f_{\Q_k}(\w_{k-1}) - \nabla f_{\Q_k}(\ws)\|_2^2}  \leq
	q^{-1} \cdot {2L} \cdot \left(F(\w_{k-1}) - F(\ws)\right).
\end{align*}
\end{lemma}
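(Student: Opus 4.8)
The plan is to unpack the mini-batch gradient difference into a sum over the sampled components and reduce everything to the single-sample inequality (eq.~(8) of \cite{johnson2013accelerating}) already invoked in the proof of Thm.~\ref{theorem:convergence}. First I would record the averaging convention $\nabla f_{\Q_k}(\w) = \tfrac{1}{q}\sum_{i\in\Q_k}\nabla f_i(\w)$, which is what makes $\Ex{\Q_k}{\nabla f_{\Q_k}(\w)} = \nabla F(\w)$ and underlies the unbiasedness used just above. Setting $g_i \eqdef \nabla f_i(\w_{k-1}) - \nabla f_i(\ws)$, the left-hand side becomes $\Ex{\Q_k}{\|\tfrac{1}{q}\sum_{i\in\Q_k}g_i\|_2^2}$, so the statement is really a second-moment bound for a mini-batch average of the vectors $g_i$.

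Next I would model $\Q_k$ as $q$ independent uniform draws from $[\numsam]$ and expand the square as $\tfrac{1}{q^2}\bigl(\sum_{\ell}\|g_{i_\ell}\|_2^2 + \sum_{\ell\neq m} g_{i_\ell}^\top g_{i_m}\bigr)$. Taking expectations, the diagonal part contributes $\tfrac{1}{q}\,\Ex{i}{\|g_i\|_2^2}$, and here the single-sample smoothness-convexity inequality applies verbatim: $\Ex{i}{\|g_i\|_2^2} = \tfrac{1}{\numsam}\sum_{i}\|\nabla f_i(\w_{k-1}) - \nabla f_i(\ws)\|_2^2 \le 2L\,(F(\w_{k-1}) - F(\ws))$, using $\nabla F(\ws)=0$. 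This already reproduces the target right-hand side $\tfrac{2L}{q}\,(F(\w_{k-1})-F(\ws))$ from the diagonal terms alone.

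The hard part will be the off-diagonal (cross) terms. By independence of the draws they equal $\tfrac{q(q-1)}{q^2}\,\|\Ex{i}{g_i}\|_2^2 = \tfrac{q-1}{q}\,\|\nabla F(\w_{k-1})\|_2^2$, a nonnegative quantity that carries no $1/q$ factor; the analogous term persists (with a modified coefficient) under without-replacement sampling. Consequently the clean bound in the statement is precisely the diagonal contribution, and the argument must dispose of these cross terms to preserve the $1/q$ improvement. I would handle this in one of two ways: either restrict to the operative regime in which the diagonal dominates so that the stated scaling is the effective one, or bound $\|\nabla F(\w_{k-1})\|_2^2 \le 2L\,(F(\w_{k-1})-F(\ws))$ by smoothness of $F$ and fold the cross terms into the constant, at the price of weakening $2L/q$. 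Confirming that the cross terms can be absorbed without spoiling the $1/q$ factor is the delicate step, and it is exactly what feeds the $\tfrac{8L}{q}(\cdots)$ estimate in inequality $(ii)$ of the mini-batch analysis.
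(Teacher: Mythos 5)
Your decomposition is essentially the paper's own: set $g_i \eqdef \gradf_i(\w_{k-1}) - \gradf_i(\ws)$, expand $\Ex{\Q_k}{\norm{\tfrac{1}{q}\sum_{i\in\Q_k} g_i}_2^2}$, and reduce the diagonal part to eq.~(8) of Johnson--Zhang, exactly as in the single-sample case. (The paper samples $\Q_k$ uniformly among the $\binom{\numsam}{q}$ subsets rather than as $q$ i.i.d.\ draws, but this only changes the coefficient of the cross terms, as you note.) The difference is that you stop before the finish line: your proposal ends by observing that the off-diagonal contribution equals $\tfrac{q-1}{q}\norm{\nabla F(\w_{k-1})}_2^2 \geq 0$ and offering two ways to ``absorb'' it, neither of which yields the stated bound --- the first restricts the regime, the second sacrifices the $1/q$ factor. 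As a proof of the lemma as written, that is a genuine gap.

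You should know, however, that the obstruction you identified is real and lies in the lemma, not in your argument. The paper's proof passes from $\bigl\|\sum_{i\in\Q^j} g_i\bigr\|_2^2$ to $\sum_{i\in\Q^j}\norm{g_i}_2^2$ with no factor of $q$, i.e.\ it invokes the ``inequality'' $\norm{\sum_{i=1}^q g_i}_2^2 \leq \sum_{i=1}^q\norm{g_i}_2^2$, which is false in general (take $g_1=g_2$); the valid version costs a factor of $q$ and returns only $2L\left(F(\w_{k-1})-F(\ws)\right)$, with no $1/q$ gain. Equivalently, the paper silently discards precisely the nonnegative cross terms you computed. Since under either sampling model $\Ex{\Q_k}{\norm{\tfrac{1}{q}\sum_{i\in\Q_k}g_i}_2^2} \geq \tfrac{q-1}{q}\norm{\nabla F(\w_{k-1})}_2^2$, and strong convexity gives $\norm{\nabla F(\w_{k-1})}_2^2 \geq 2\gamma\left(F(\w_{k-1})-F(\ws)\right)$, the claimed bound $\tfrac{2L}{q}\left(F(\w_{k-1})-F(\ws)\right)$ must fail once $q \gtrsim L/\gamma$ away from the optimum. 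Your second fallback --- bounding $\norm{\nabla F(\w_{k-1})}_2^2 \leq 2L\left(F(\w_{k-1})-F(\ws)\right)$ and accepting a $q$-independent constant --- is the honest repair; the $1/q$ improvement can legitimately be claimed only for the variance term, by keeping $\norm{\nabla F(\w_{k-1})}_2^2$ explicit and propagating it through the outer-loop recursion.
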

\begin{proof}
let 
$$
	\mathfrak{Q}
	=
	\left\{\Q^i~:~ |\Q^i| = |\Q|, ~\Q^i \subset [n], ~\Q^i \neq \Q^j, \forall i \neq j \right\},
$$
\textit{i.e.}, $\mathfrak{Q}$ contains all different index sets of cardinality $|\Q| = q$. Observe that $|\mathfrak{Q}| = {n \choose |\Q|}$.Note that the set $\Q_k$ randomly selected in the inner loop of Alg.~\ref{algo:beaverSGD} is a member of $\mathfrak{Q}$.
Then:
\begin{align}
	&
	\Ex{\Q_k}{\|\nabla f_{\Q_k}(\w_{k-1}) - \nabla f_{\Q_k}(\ws)\|_2^2} \nonumber\\
	&=
	\mathbb{E}
		\Bigl[ 			
			\bigl\||\Q_k|^{-1} \cdot
				  \sum_{i \in \Q_k} \left(\nabla f_i(\w_{k-1}) - \nabla f_i(\ws)\right)
			\bigr\|_{2}^{2}
		\Bigr] \nonumber\\
	&=
	\sum_{\Q^j \in \mathfrak{Q}} 
	\mathbb{P}[\Q^j] \cdot 
	\Bigl\|
		q^{-1} \cdot \sum_{i \in \Q^j} (\gradf_i(\w_{k-1}) - \gradf_i(\ws))
	\Bigr\|_2^2 \nonumber\\
 	&\stackrel{(i)}{=}
 	\sum_{\Q^j \in \mathfrak{Q}}
 		{n \choose q}^{-1} \cdot 
 		\Bigl\|
 			q^{-1} \cdot \sum_{i \in \Q^j} (\gradf_i(\w_{k-1}) - \gradf_i(\ws))
 		\Bigr\|_{2}^{2} \nonumber\\
 	&\stackrel{(ii)}{=}
 	{n \choose q}^{-1} \cdot 
 	q^{-2} \cdot  \sum_{\Q^j \in \mathfrak{Q}} 
 	\Bigl\|
 		\sum_{i \in \Q^j} (\gradf_i(\w_{k-1}) - \gradf_i(\ws))
 	\Bigr\|_{2}^{2} \nonumber\\
 	&\leq
 	{n \choose q}^{-1} \cdot 
 	q^{-2} \cdot  \sum_{\Q^j \in \mathfrak{Q}} \sum_{i \in \Q^j} 
 	\bigl\|
 		(\gradf_i(\w_{k-1}) - \gradf_i(\ws))
 	\bigr\|_{2}^{2}\nonumber.				
\end{align}
Here, equality $(i)$ is follows from the fact that each $\Q^{j}$ is selected equiprobably from the set $\mathfrak{Q}$.
Equality $(ii)$ is due to the fact that $|\Q^j| = q,~\forall i$.

Since each set in $\mathfrak{Q}$ has cardinality $q$,
each data sample $i \in [n]$ contributes exactly ${n - 1 \choose q - 1}$ summands in the double summation above. 
This further leads to:
\begin{align*}
	&\Ex{\Q_k}{\|\nabla f_{\Q_k}(\w_{k-1}) - \nabla f_{\Q_k}(\ws)\|_2^2}  \nonumber\\
	&\leq 
	{n \choose q}^{-1} \cdot \frac{1}{q^2} \cdot {n-1 \choose q-1} \sum_{i=1}^n \norm{\gradf_i(\w_{k-1}) - \gradf_i(\ws)}_2^2 \\
	&\leq
	q^{-1} \cdot n^{-1}  \cdot \sum_{i=1}^n \norm{\gradf_i(\w_{k-1}) - \gradf_i(\ws)}_2^2\\
	&\stackrel{(i)}{\leq}
	q^{-1} \cdot {2L} \cdot \left(F(\w_{k-1}) - F(\ws)\right).
\end{align*}
Inequality $(i)$ follows from the the fact that
for any $\w \in \mathbb{R}^{p}$ \cite{johnson2013accelerating},
\begin{align}
	\frac{1}{\numsam} \cdot 
	\sum_{i=1}^\numsam 
		\left\| \gradf_i(\w) -\gradf_i(\ws) \right\|_{2}^{2} 
	\leq 
	2L \cdot \left(F(\w) - F(\ws)\right).
\label{with_q:eq:002}
\end{align} 
Similarly, for $\wt$,
\begin{align}
	\mathbb{E}_{\Q_k}\mathopen{}
	\left[
		\left\|\nabla f_{\Q_k}(\w_{k-1}) - \nabla f_{\Q_k}(\ws)\right\|_{2}^{2}
	\right]
	\leq 
	q^{-1} \cdot {2L} \cdot \left(F(\wt) - F(\ws)\right).
\end{align}
\end{proof}

Using the above lemma in~\eqref{with_q:bound_on_expected_norm_of_v_k}, the remainder of the proof follows that of Theorem \ref{theorem:convergence}.
\end{document}